\documentclass[pdflatex,sn-apa,iicol]{cls/sn-jnl}

\usepackage{graphicx,tabularx,adjustbox}%
\usepackage{multirow}%
\usepackage{amsmath,amssymb,amsfonts,bbm}%
\usepackage{amsfonts,amstext,dsfont,mathtools}%
\usepackage{amsthm}%
\usepackage{mathrsfs}%
\usepackage[title]{appendix}%
\usepackage{xcolor}%
\usepackage{textcomp}%
\usepackage{manyfoot}%
\usepackage{booktabs}%
\let\orcidlogo\undefined
\usepackage{orcidlink}
\usepackage{algorithmicx}%
\usepackage{listings}%
\usepackage{xspace}
\usepackage{epsfig} %
\usepackage{tikz}
\usetikzlibrary{fadings}

\usepackage{hhline}
\usepackage{xcolor}
\usepackage{float}
\usepackage{subcaption}

\usepackage[ruled,vlined,noend,linesnumbered]{algorithm2e}
\usepackage{algcompatible}%
\newcommand{\bubblestar}{Bubble$^\star$\xspace}
\newcommand{\norm}[1]{\left\| #1 \right\|}

\DeclareMathOperator*{\argmin}{arg\,min}

\DeclareMathOperator*{\diag}{diag}

\newcommand{\bfa}{\mathbf{a}}

\newcommand{\bfc}{\mathbf{c}}

\newcommand{\bfe}{\mathbf{e}}
\newcommand{\bff}{\mathbf{f}}
\newcommand{\bfg}{\mathbf{g}}

\newcommand{\bfj}{\mathbf{j}}
\newcommand{\bfk}{\mathbf{k}}

\newcommand{\bfo}{\mathbf{o}}
\newcommand{\bfp}{\mathbf{p}}
\newcommand{\bfq}{\mathbf{q}}

\newcommand{\bfu}{\mathbf{u}}
\newcommand{\bfv}{\mathbf{v}}
\newcommand{\bfw}{\mathbf{w}}
\newcommand{\bfx}{\mathbf{x}}
\newcommand{\bfy}{\mathbf{y}}
\newcommand{\bfz}{\mathbf{z}}

\newcommand{\bfnu}{\boldsymbol{\nu}}

\newcommand{\bftau}{\boldsymbol{\tau}}

\newcommand{\bfomega}{\boldsymbol{\omega}}
\newcommand{\bfxi}{\boldsymbol{\xi}}

\newcommand{\bfG}{\mathbf{G}}

\newcommand{\bfT}{\mathbf{T}}

\newcommand{\bbB}{\mathbb{B}}

\newcommand{\bbR}{\mathbb{R}}

\newcommand{\bbZ}{\mathbb{Z}}

\newcommand{\calB}{\mathcal{B}}
\newcommand{\calC}{\mathcal{C}}

\newcommand{\calF}{\mathcal{F}}

\newcommand{\calJ}{\mathcal{J}}

\newcommand{\calL}{\mathcal{L}}

\newcommand{\calN}{\mathcal{N}}
\newcommand{\calO}{\mathcal{O}}
\newcommand{\calP}{\mathcal{P}}

\newcommand{\calR}{\mathcal{R}}
\newcommand{\calS}{\mathcal{S}}

\newcommand{\calV}{\mathcal{V}}

\newcommand{\calX}{\mathcal{X}}

\newtheorem{theorem}{Theorem}

\newtheorem{corollary}{Corollary}

\newtheorem{definition}{Definition}

\def\htwomapping{$\text{H}_2$-Mapping\xspace}
\def\oren{OREN\xspace}

\begin{document}

\def\papertitle{From Distances to Trajectories: Real-Time Signed Distance Function Mapping and Distance-Accelerated Motion Planning for UAVs}
\title{\Large\bfseries \papertitle}

\author*[1]{\fnm{Jason} \sur{Stanley}\orcid{0009-0002-5425-4834}}\email{jtstanle@ucsd.edu}
\equalcont{These authors contributed equally to this work.}
\author*[1]{\fnm{Zhirui} \sur{Dai}\orcid{0000-0001-8362-3796}}\email{zhdai@ucsd.edu}
\equalcont{These authors contributed equally to this work.}

\author[1]{\fnm{Qihao} \sur{Qian}\orcid{0009-0005-7286-9084}}\email{q2qian@ucsd.edu}

\author[1]{\fnm{Tzu-Chin} \sur{Ho}\orcid{0009-0002-8051-1072}}\email{tzh005@ucsd.edu}

\author[1]{\fnm{Tianxing} \sur{Fan}\orcid{0009-0003-0820-7735}}\email{t2fan@ucsd.edu}

\author[2]{\fnm{Siddharth} \sur{Saha}}
\author[2]{\fnm{Christopher} \sur{Barngrover}}
\author[1]{\fnm{Ki Myung Brian} \sur{Lee}\orcid{0000-0003-1449-2125}}\email{kmblee@ucsd.edu}
\author[1]{\fnm{Nikolay} \sur{Atanasov}\orcid{0000-0003-0272-7580}}\email{natanasov@ucsd.edu}

\affil[1]{\orgdiv{Electrical and Computer Engineering Department}, \orgname{UC San Diego}, \orgaddress{\street{9500 Gilman Dr}, \city{La Jolla}, \postcode{92093}, \state{CA}, \country{USA}}}
\affil[2]{\orgname{Shield AI}, \orgaddress{\street{600 W Broadway Suite \#600}, \city{San Diego}, \postcode{92101}, \state{CA}, \country{USA}}}

\abstract{
	Autonomous flight in cluttered environments requires a robot to build a geometric map of its surroundings and plan safe, dynamically feasible trajectories, all onboard and in real time. Conventional approaches treat mapping and planning as separate stages and often rely on binary occupancy for collision checking. We argue that these two stages should be co-designed around a single representation: a signed distance function (SDF). By encoding distance to the nearest obstacle, an SDF provides richer information for planning and trajectory optimization than occupancy alone. We develop an Octree REsidual Network (OREN) that pairs an explicit octree prior with an implicit neural residual to reconstruct SDFs online from point cloud observations with the efficiency of volumetric methods and the accuracy and differentiability of neural methods. In tandem, we develop Bubble$^\star$, a search-based planner that exploits the distance information to grow maximal collision-free balls, which we call bubbles, with formal guarantees of termination, completeness, and failure detection. Planning over a graph of bubbles significantly reduces collision checks compared to a grid-based A$^\star$ search and returns a bubble sequence that forms a safe corridor for trajectory optimization. We demonstrate the integrated OREN-Bubble$^\star$ approach onboard a quadrotor, navigating unseen indoor environments in real time under tight compute constraints. OREN improves SDF estimation by $22$\% compared to baselines, while Bubble$^\star$ finds trajectories spanning $\approx 90$~m through a cluttered environment in $1$--$3$~sec., whereas baselines take up to $10$~sec. in the same environment.
}

\keywords{Signed Distance Function, Implicit Representation, Motion Planning, Unmanned Aerial Vehicle, Autonomous Flight}

\maketitle

\clearpage
\section{Introduction}
\label{sec:introduction}

Autonomous flight in unknown environments requires an unmanned aerial vehicle (UAV) to build a map of its surroundings and to plan safe, dynamically feasible motion through it, all onboard and in real time from streaming sensor data. This capability underpins many applications of aerial robotics, including inspection, delivery, and search and rescue, where UAVs must navigate safely and efficiently.
Conventionally, mapping and planning are treated as two separate stages connected by a binary occupancy map. The planner queries the map to test for collisions at each robot configuration, disregarding distance and gradient information that a richer distance-based representation could provide. We argue that the two stages can instead be co-designed around a signed distance function (SDF) representation, exploiting the distance information to reduce the collision checking operations, accelerate the planning, and provide constraints for trajectory optimization.

Accurate environment representations are essential across the robot autonomy stack, including in simultaneous localization and mapping \citep{ortiz_isdf_2022,pan_pin-slam_2024,miso2025}, navigation \citep{oleynikova_voxblox_2017,long_sensor-based}, and manipulation~\citep{ReDSDF,li2024config,li2024robot, brunner_aerial_manipulation}.
SDF representations are particularly well suited to serve both mapping and planning. Given a query point, an SDF returns the signed distance to the nearest surface, with the sign indicating whether the query lies in free (positive) or occupied (negative) space.
It captures obstacle surfaces implicitly as its zero-level set \citep{park_deepsdf_2019}, while simultaneously providing clearance to the nearest obstacle.
To be useful in aerial autonomy, an SDF must be built and queried quickly from streaming observations, keeping a small memory footprint and remaining accurate under a tight onboard computational budget. However, existing SDF methods rarely meet all these requirements at once.

\begin{figure}[t]
    \centering
    \begin{subfigure}{\linewidth}
    \centering
    \includegraphics[width=\linewidth,trim={0 50pt 0 250pt},clip]
    {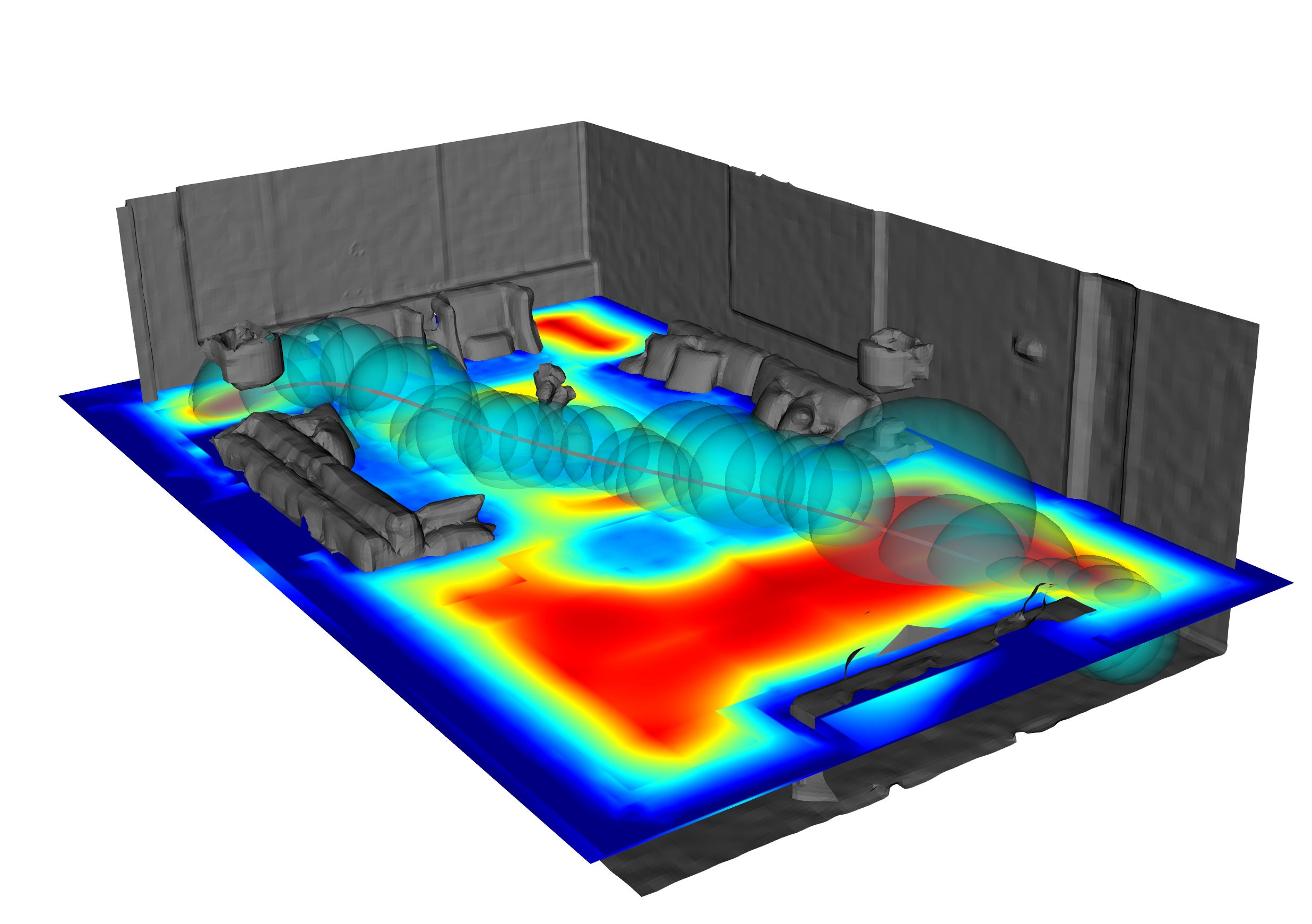}
    \caption{}
    \end{subfigure}
    \begin{subfigure}{\linewidth}
        \centering
        \includegraphics[width=\linewidth, trim=0cm 60cm 0cm 40cm, clip=true]{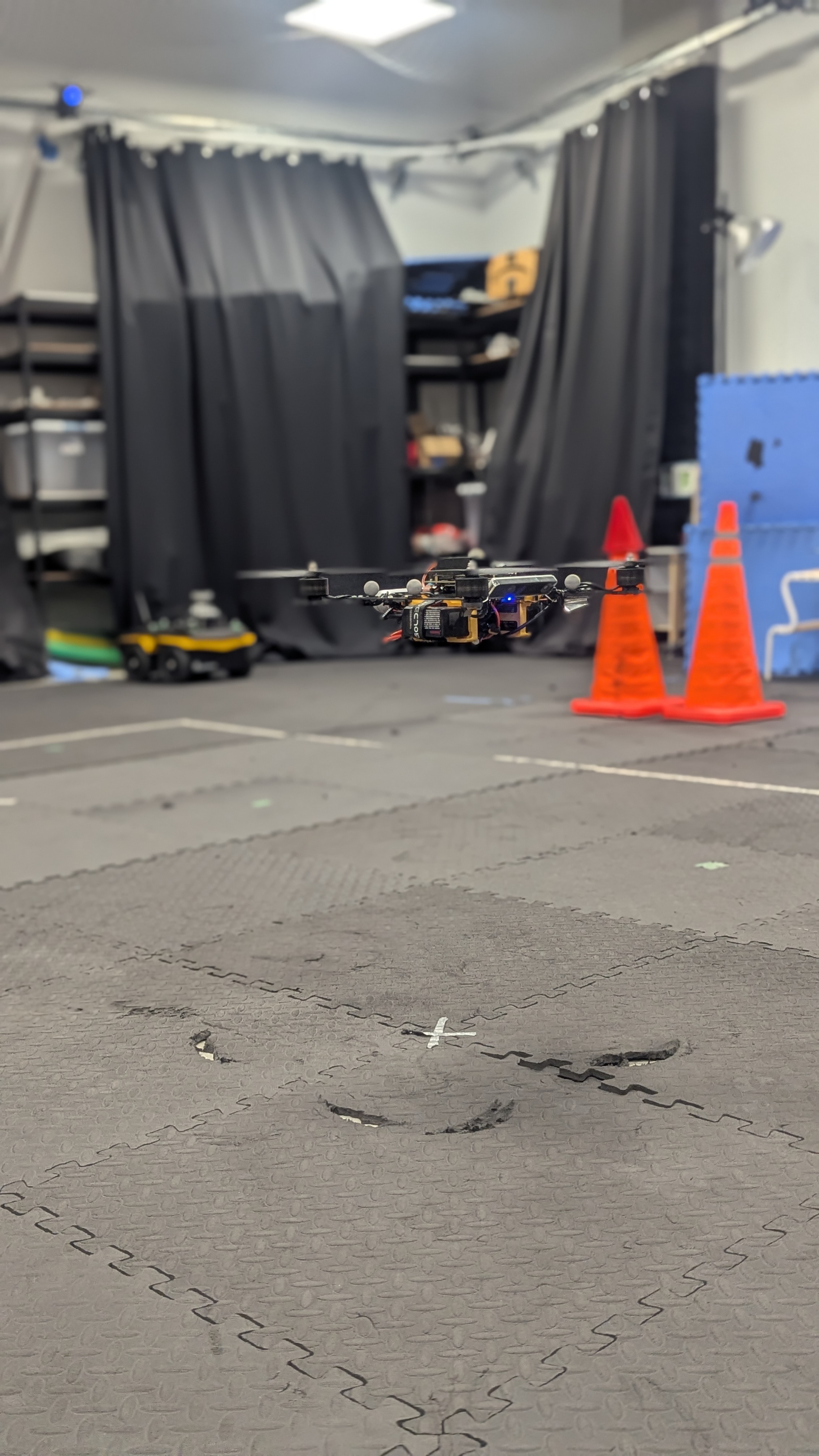}
        \caption{}
    \end{subfigure}
        \caption{\small We consider integrated mapping and planning for autonomous UAV flight. We develop \oren to reconstruct accurate Euclidean SDF online from streaming point clouds and \bubblestar to plan safe flight corridors exploiting the distance information. (a) The color map shows distance to the nearest surface ({\color{red}large} to {\color{blue}small}), and the planned safe corridor (composed of {\color{cyan}cyan balls}). (b) \oren and \bubblestar running onboard a quadrotor equipped with a Jetson Orin NX.}
        \label{fig:teaser}
\end{figure}

In this paper, we present a unified approach for mapping and planning for aerial robots, built around non-truncated SDF (Fig.~\ref{fig:teaser}). Our method reconstructs SDF from point cloud measurements onboard a UAV and enables efficient distance-aware motion planning and trajectory optimization with safety and dynamic-feasibility guarantees. We extend our prior conference paper, \citet{dai_oren_2025}, which introduced an Octree REsidual Network (\oren) for SDF mapping, combining an explicit prior from octree interpolation with implicit neural features decoded into a correction of the prior.
To construct the prior, we use an octree with distance and gradient estimates stored at the octant vertices and gradient-augmented interpolation to compute SDF at arbitrary query positions.
We correct the prior with a neural network residual, which recovers fine details of the observed surface from implicit features.

Given an SDF model of the environment, the second challenge is turning it into safe, dynamically feasible UAV trajectories.
Commonly, this is done by first planning a collision-free path, e.g., via A$^\star$ or RRT \citep{heuristic_basis, karaman_sampling}, then constructing a safe corridor of overlapping convex regions around the path \citep{liu_corridor}, and finally optimizing a dynamically feasible trajectory inside the corridor \citep{MINCO}. Bubble Planner \citep{ren2022bubbleplannerplanninghighspeed}, a representative work, finds a path via search, grows a receding corridor of spheres around it, and optimizes a trajectory through the corridor. 
We observe that with SDF, the path search and the corridor construction need not be separate stages because the distance at any point already defines a maximal collision-free ball around it, which we call a \emph{bubble}.
We develop \bubblestar, a motion planning algorithm that constructs a graph of bubbles from SDF queries to plan a safe bubble corridor. We establish formal guarantees of termination, completeness, and failure detection under mild clearance assumptions, then optimize a dynamically feasible trajectory within the corridor using MINCO \citep{MINCO}, a minimum-control-effort polynomial parameterization. Because \bubblestar reads signed distances directly from \oren, mapping and planning share a single representation, requiring far fewer collision checks than grid-based A$^\star$. In our experiments, \oren improves SDF estimation by $22$\% over baseline methods and runs efficiently enough for real-time deployment onboard a UAV, while \bubblestar finds trajectories $\approx 90$~m long through a cluttered environment in $1$--$3$~sec., versus up to $10$~sec. for existing methods.

In summary, our work makes the following contributions.
\begin{itemize}
    \item We develop \oren, a mapping approach that uses an explicit octree prior and implicit neural correction to reconstruct accurate, differentiable, non-truncated SDF in real time.

    \item We propose \bubblestar, a search-based planner that exploits an SDF map to construct a graph of collision-free spheres (bubbles), unifying path search and safe-corridor construction into a single algorithm. We provide formal guarantees of termination, completeness, and failure detection, and perform trajectory optimization within the bubble corridor to produce dynamically feasible UAV trajectories.

    \item We compare our methods to mapping and planning baselines and demonstrate the complete approach in real-world autonomous quadrotor flight in unknown environments.
\end{itemize}

\section{Related Work}
\label{sec:related_work}

This section reviews existing methods for SDF reconstruction and integrated mapping and planning for aerial robots.
\subsection{SDF Reconstruction}

Methods for learning SDFs fall into three broad groups: volumetric, Gaussian Process (GP), and neural network.

\emph{Volumetric methods} \citep{curless_volumetric_1996,newcombe_kinectfusion_2011,kahler_very_2015,oleynikova_voxblox_2017,han_fiesta_2019,pan_voxfield_2022,millane_nvblox_2024} fuse observations into a regular grid using voxel hashing for efficient updates and queries. Voxblox \citep{oleynikova_voxblox_2017} builds a TSDF layer from projective distance, then propagates it to an SDF layer via breadth-first search (BFS). Both steps introduce errors that subsequent works \citep{han_fiesta_2019,pan_voxfield_2022} reduce by using non-projective distance and replacing BFS path length with distance to the nearest oriented surface point. Nonetheless, these representations are discrete, non-differentiable, and hard to scale to large scenes.
In comparison, \oren learns a continuous, differentiable SDF in real time, with low memory usage that scales to large scenes. \oren first estimates an SDF prior from explicit discrete priors stored in an octree, then uses implicit neural features to predict residuals that correct the prior, forming a compact, differentiable representation of continuous SDF.

\emph{Gaussian Process (GP) methods} \citep{lee2019gpis,lan2021loggpis,wu_vdb-gpdf_2025,dai2026kernelsdf} learn continuous fields supporting gradient computation and uncertainty quantification. GP implicit surface (GPIS) \citep{lee2019gpis} iteratively estimates oriented surface points and regresses SDF online, learning the surface implicitly as the SDF's zero-level set. GPIS is accurate near the surface but fails to extrapolate away from it, since training data comes only from near-surface points and distant queries fall back to the zero-mean prior.
Based on the connection between the heat equation and the unsigned distance function (UDF) \citep{varadhan_behavior_1967}, Log-GPIS \citep{lan2021loggpis} learns globally-generalized unsigned distances in log space. VDB-GPDF \citep{wu_vdb-gpdf_2025} extends this log-GP technique to jointly learn surface estimation and UDF, using OpenVDB \citep{openvdb25} for memory efficiency.
However, both methods omit the sign and struggle to scale to large scenes on resource-constrained platforms due to the cubic complexity of matrix inversion during training.
In comparison, \oren computes the prior via $O(1)$ trilinear interpolation in the octree and the residual via matrix multiplication, roughly $O(n^2)$ for $n$ hidden dimensions, while using less memory than GP.

\emph{Neural network methods} are attractive for their native speed on GPUs. DeepSDF \citep{park_deepsdf_2019} showed that neural networks can learn continuous implicit SDFs, inspiring many follow-up works. iSDF \citep{ortiz_isdf_2022} learns SDF incrementally with Eikonal regularization. NeuS \citep{wang_neus_2021} jointly learns SDF and radiance fields \citep{nerf2020}. \citet{gropp_implicit_2020,takikawa_lod_2021} refine the network architecture and loss designs. These methods learn accurate SDF near the surface, which suffices for surface reconstruction, but rarely far from it. HotSpot \citep{wang_hotspot_2024} learns non-truncated SDF but is only verified at the object level and requires extensive data and training.

Recent work develops promising hybrid models that combine an explicit geometric structure with implicit neural features. PIN-SLAM \citep{pan_pin-slam_2024} stores neural features in near-surface voxels and decodes the SDF from nearby features, while \htwomapping \citep{jiang_h2-mapping_2023} combines an octree-based SDF prior with a neural residual. However, both methods learn only truncated SDF. HIO-SDF \citep{hio-sdf_2024} removes truncation by training on global priors from Voxfield \citep{pan_voxfield_2022}, but inherits the volumetric method's limited accuracy, and over-smoothes as the scene grows due to fixed network capacity.

In contrast, our method, \oren, builds an extendable semi-sparse octree that stores SDF values and gradients to efficiently capture the SDF of the whole space as the environment grows. Through gradient-augmented interpolation, \oren produces more accurate SDF priors, leaving more capacity for the subsequent neural network to recover surface detail from implicit features. Furthermore, the loss function is designed to encourage the network to learn accurate SDF far from the surface, which is critical for planning, and enables faster convergence than existing methods, allowing \oren to run in real time on a UAV.

\subsection{Integrated Mapping and Planning for Aerial Robots}

Motion planning for quadrotors involves local trajectory optimization and, most often, global path planning, both relying on \emph{differential flatness} \citep{MinSnapTrajectory,faessler2018differential}: the property that a dynamical system's full states and controls can be computed analytically from a reduced state, the \emph{flat output}, and its derivatives up to the relative degree.

Quadrotor dynamics are differentially flat, with 3D position and yaw as flat outputs of relative degree four: all quadrotor states (e.g., orientation) and controls (e.g., thrust) can be computed analytically from up to the fourth derivatives of position and yaw \citep{MinSnapTrajectory}, even with drag \citep{faessler2018differential}.
Thus, planning methods need only produce a position trajectory that is at least four times differentiable, and optionally yaw, without considering the full dynamics.

\emph{Mapless local trajectory optimization methods} plan directly from observations to avoid obstacles, trading global optimality for fast computation. \citet{song_rl_drone_racing} achieve high-speed drone racing by using reinforcement learning to generate trajectories from a single image, treating planning as a black box. \citet{jacquet2025neural} take a grey-box approach, learning a network that converts a depth image to a local SDF for local nonlinear model-predictive control (NMPC). All these methods share a limited field of view, risking dead-end failures. \citet{mapless_planner} partially mitigate this by efficiently storing sensor history.
Since they do not rely on a map, these methods often lack formal guarantees of optimality, leading to local optima.

\emph{Global planning methods} store obstacle information in a map and plan safe paths against it.
Global motion planning is dominated by search-based algorithms, e.g., A$^\star$ \citep{heuristic_basis}, and sampling-based ones, e.g., RRT \citep{lavalle2001rrt} and its asymptotically optimal variant RRT$^\star$ \citep{karaman_sampling}.
Sampling-based methods are widely used, e.g., in OMPL \citep{ompl_2012}, with quadrotor examples by \citet{gao2019flying,funk2021multires}.
Because they rely on random sampling, their completeness and optimality guarantees are only probabilistic or asymptotic.
In contrast, search-based methods operate on regular grids or state lattices \citep{likhachev2009lattice}, guaranteeing deterministic (resolution-)completeness or optimality, albeit confined by the chosen resolution.
\citet{liu_corridor, dharmadhikari2020motionprimitives} use search-based methods for quadrotor navigation.
Our approach mitigates this cost-versus-speed trade-off in resolution choice by building safe bubbles from clearance information, obviating collision checking within them.

Both search-based and sampling-based planners can be either \emph{geometric} or \emph{kinodynamic}.
Geometric planners produce a sequence of waypoints, assuming any pair of waypoints is achievable, whereas kinodynamic planners account for kinematic or dynamic constraints by requiring a valid motion between waypoints.
To plan dynamically feasible quadrotor trajectories, one can either 1) use a geometric planner followed by trajectory optimization to track the resulting waypoints, or 2) use a single kinodynamic planner directly.

Kinodynamic planning may appear simpler, as it uses a single planner. \citet{lavalle2001rrt} originally designed RRT for kinodynamic planning. However, \citet{Richter2016} show that for real quadrotors, full kinodynamic planning performs worse than combining geometric planning with trajectory optimization, due to its additional computation. \citet{allen2019realtime} speed up real-time sampling-based kinodynamic planning by introducing a learned reachability classifier between states. Meanwhile, \citet{mueller_motion_primitive} present efficient methods for computing \emph{motion primitives}: a library of trajectories used to connect states during planning. \citet{liu2017search} use such motion primitives in a search-based kinodynamic planner, showing they discretize the state space into a lattice suitable for efficient search. \citet{zhou_robust_efficient} improve search-based planning with continuous optimization, and \citet{ryll_efficient_traj_planning} present a receding-horizon variant for unknown environments. To handle state uncertainty alongside dynamic feasibility, \citet{agha2011firm,majumdar2017funnel} plan between distributions rather than states, composing feedback controllers to reach one distribution from another.
Our approach is most similar to \citet{Richter2016}, combining a geometric planner with subsequent trajectory optimization.
However, our geometric planner composes collision-free regions, leaving room for dynamic feasibility while ensuring collision avoidance.

\emph{Safe corridor methods}, such as \citet{Deits2015, liu_corridor, ren2022bubbleplannerplanninghighspeed, MINCO, tordesillas2022faster, lee2024safebubblecovermotion}, similarly find a sequence of collision-free regions (safe corridors).
These methods are popular for quadrotor navigation because they greatly accelerate trajectory optimization, making them suitable for onboard and even GPS-denied navigation~\citep{mohta_gps_denied}.
Most safe corridor methods first plan a geometric path, then expand convex regions around the waypoints.
\citet{liu_corridor} first introduce safe corridors: overlapping collision-free polyhedra expanded around a path found via jump point search \citep{harabor2011jumppointsearch} on an OctoMap \citep{hornung13auro}, with an optimal continuous trajectory found via quadratic programming (QP).
\citet{MINCO} accelerate optimization by introducing a closed-form trajectory parameterization enabling unconstrained optimization within safe corridors.
Subsequent approaches \citep{ren2025safety,ren2022bubbleplannerplanninghighspeed} speed computation using lighter local maps instead of an OctoMap.
In particular, \citet{ren2022bubbleplannerplanninghighspeed} build an efficient KD-Tree for proximity queries and use clearance information to build spherical bubbles along a geometric path.
Our approach also builds spherical bubbles from clearance but directly searches for a sequence of them.
In contrast to inflating corridors around a pre-computed path, few recent methods directly search for a sequence of corridors.
Graph-of-convex-sets formulations \citep{Marcucci_motion_planning,marcucci2024boxes} jointly optimize corridor choice and the trajectory within them via mixed-integer optimization.
However, these methods require a known corridor set, and mixed-integer optimization can be expensive.
In aerial robotics, \citet{gao2019flying} search for a sequence of spherical corridors using a modified RRT$^\star$, enabling high-speed LiDAR-based navigation. Similarly, \citet{funk2021multires} find convex regions via RRT search with collision-checking against a multi-resolution SDF grid built from visual sensors \citep{laina2025scalable}. Our prior work, \citet{lee2024safebubblecovermotion}, used a sampling-based formulation that first covers free space with a graph of bubbles and searches it via Dijkstra's shortest-path algorithm. Here, we use a search-based formulation that improves efficiency by interleaving the graph construction and the search and by biasing the search towards the goal via a heuristic function, similarly to A$^\star$.

\begin{figure*}[t]
	\centering
	\includegraphics[width=\linewidth]{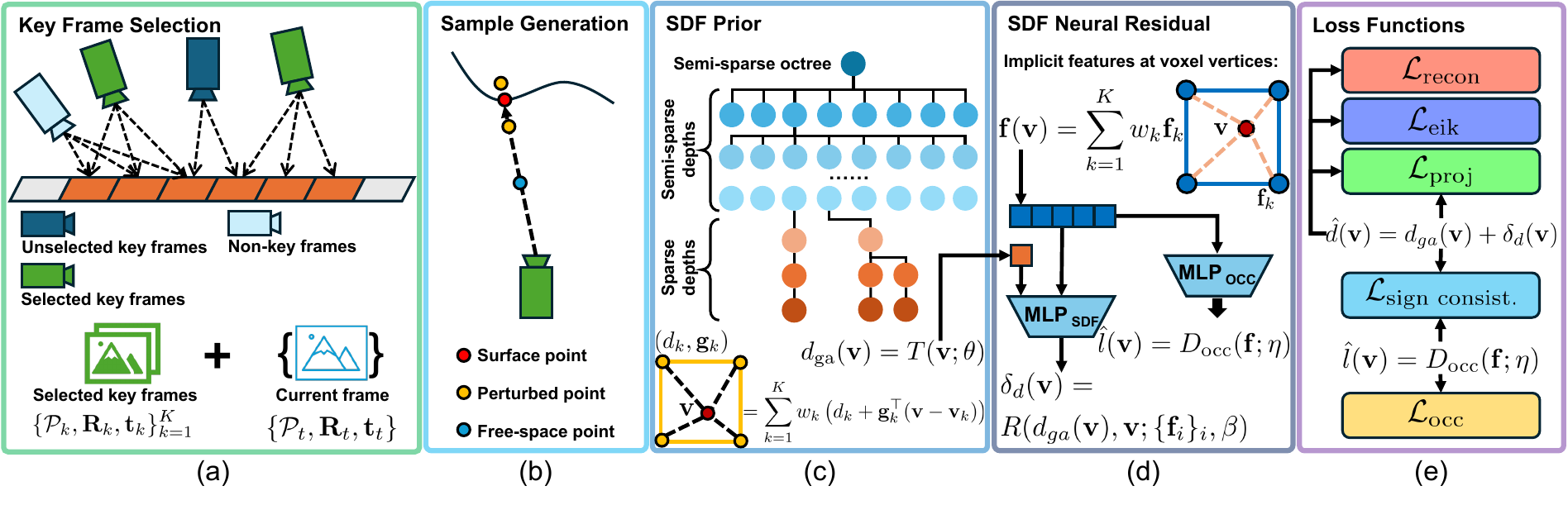}
	\caption{\small Overview of \oren: a) We keep key frames with small overlap and those that maximize the surface coverage for training; b) with the selected key frames and the current frame, we generate three types of samples: \textcolor{red}{surface} points, \textcolor{orange}{perturbed} points around the surface, and \textcolor{cyan}{free-space} points; c) to predict SDF, we first obtain an SDF prior $d_{ga}(\bfv)$ with gradient-augmented interpolation in a semi-sparse octree, where each octant vertex has estimated SDF value and gradient; d) we also obtain an implicit neural feature for $\bfv$ by trilinear interpolation of implicit features stored at the voxel's vertices, which is fed into an MLP decoder to obtain an SDF residual correction $\delta_d(\bfv)$, and another MLP decoder to predict occupancy; e) the SDF prior $d_{ga}(\bfv)$ and the SDF residual $\delta_d(\bfv)$ are combined as the final SDF prediction $\hat{d}(\bfv) = d_{ga}(\bfv) + \delta_d(\bfv)$, and the parameters are trained with five loss functions: \textcolor{red}{reconstruction} loss, \textcolor{violet}{Eikonal} loss, \textcolor{green}{projection} loss, \textcolor{orange}{occupancy} loss and \textcolor{cyan}{sign consistency} loss.}
	\label{fig:method_overview}
\end{figure*}

\section{Problem Statement}
\label{sec:problem_statement}
Consider a 3D environment $\calR \subset \bbR^3$ with obstacles represented as a subset $\Omega \subset \calR$. The SDF $d: \bbR^3 \to \bbR$ of $\Omega$ is defined as the shortest distance from any point $\bfv \in \bbR^3$ to the obstacle surface $\partial \Omega$, with a sign indicating whether $\bfv$ is inside or outside of $\Omega$:
\begin{equation} \label{eq:sdf_definition}
	d(\bfv) =
	\begin{cases}
		\phantom{+}\min_{\bfy \in \partial \Omega} \left\|\bfv - \bfy\right\|_2, & \bfv \not\in \Omega, \\
		-\min_{\bfy \in \partial \Omega} \left\|\bfv - \bfy\right\|_2,           & \bfv \in \Omega.
	\end{cases}
\end{equation}
The SDF satisfies two key properties: 1) the obstacle surface is encoded as the zero-level set, $d(\bfv) = 0$, $\forall \bfv \in \partial\Omega$; and 2) the gradient of $d(\bfv)$ is the unit vector pointing away from the nearest surface point and satisfies an Eikonal equation~\citep{ortiz_isdf_2022}:
\begin{equation}
	\nabla d(\bfv)=\frac{\bfv-\bfv_*}{d(\bfv)}, \quad \left\|\nabla d(\bfv)\right\|_2=1, \; \text{a.e.},\label{eq:sdf_constraints}
\end{equation}
where $\bfv_* \in \arg \min_{\bfy\in\partial\Omega} \left\|\bfv-\bfy\right\|_2$.

We consider a quadrotor robot, equipped with a range sensor (e.g., LiDAR or depth camera), operating in the environment. Given a stream of point clouds $\calP_t$, our objective is to: 1) obtain an estimate $\hat{d}: \bbR^3 \to \bbR$ of the SDF of $\Omega$ and 2) plan a safe dynamically feasible trajectory for the robot. Let $\bfx = (\bfp, \bfnu, R, \bfomega) \in \calX := \bbR^3 \times \bbR^3 \times SO(3) \times \bbR^3$ be the quadrotor state, consisting of its position $\bfp$ and velocity $\bfnu$ in the inertial frame, orientation $R \in SO(3)$, and body angular velocity $\bfomega$. Given control input $\bfu = (F, \bftau) \in \bbR \times \bbR^3$, including the collective motor thrust $F$ and body torque $\bftau$, the quadrotor dynamics are:
\begin{equation} \label{eq:quad_dynamics}
	\dot{\bfx} \!=\! \bff(\bfx) + \bfG(\bfx) \bfu =\!
	\begin{cases}
		\dot{\bfp} = \bfnu,                                 \\
		\ddot{\bfp} = -g\, \bfe_3 + \frac{1}{m} R F \bfe_3,\!\!\! \\
		\dot{R} = R\, \hat{\bfomega},                      \\
		\dot{\bfomega} = J^{-1} (\bftau - \hat{\bfomega} J \bfomega),
	\end{cases}
\end{equation}
where $m$ is the mass, $J$ is the inertia matrix, $\bfe_3 = (0, 0, 1)^\top$, $g$ is the gravitational acceleration, and $\hat{\bfomega}$ is a skew-symmetric matrix formed from $\bfomega$.

The quadrotor dynamics \eqref{eq:quad_dynamics} are differentially flat \citep{MinSnapTrajectory, MINCO}; there exists a flat output $\bfz$ such that the state and control can be expressed as algebraic functions of $\bfz$ and $k$ of its derivatives,
\begin{equation*}
	\bfx = \Psi_{\bfx}(\bfz, \dot{\bfz}, \ldots, \bfz^{(k-1)}), \;
	\bfu = \Psi_{\bfu}(\bfz, \dot{\bfz}, \ldots, \bfz^{(k)}).
\end{equation*}
For a quadrotor, the flat output $\bfz = (\bfp, \psi)$ consists of its position $\bfp$ and yaw angle $\psi$. The state $\bfx$ and control $\bfu$ can be recovered from derivatives of the flat output up to order $k=4$ with $\Psi_{\bfx}$ and $\Psi_{\bfu}$ known in closed form \citep{MinSnapTrajectory}. The yaw $\psi$ can be determined separately, e.g., along the velocity direction, so motion planning reduces to designing a continuous position trajectory $\bfp: [0, T] \rightarrow \bbR^3$ from a start $\bfp_s$ to a goal $\bfp_g$ that remains collision-free and respects the vehicle dynamics.
We express this as a trajectory optimization problem:
\begin{equation} \label{eq:planning_problem}
	\begin{aligned}
		\min_{\bfp(\cdot),\, T} \quad & \int_0^T \| \bfp^{(4)}(t) \|_2^2 \, dt + \rho\, T                 \\
		\text{s.t.} \quad             & \bfp(0) = \bfp_s, \quad \bfp(T) = \bfp_g,                                    \\
		                              & \hat{d}(\bfp(t)) \geq r, \quad \forall t \in [0, T],                         \\
		                              & \left\| \dot{\bfp}(t) \right\|_2 \leq v_{\max},  \quad \forall t \in [0, T], \\
		                              & \left\| \ddot{\bfp}(t) \right\|_2 \leq a_{\max}, \quad \forall t \in [0, T],
	\end{aligned}
\end{equation}
where the objective minimizes the trajectory snap $\|\bfp^{(4)}\|_2^2$ with $\rho > 0$ trading off smoothness against duration. In the constraints, $r > 0$ is a safety radius accounting for the robot size and errors in the SDF estimate $\hat{d}$, while $v_{\max}$ and $a_{\max}$ are velocity and acceleration bounds. 

In summary, we consider an integrated mapping and planning problem. Given point cloud measurements, we estimate the SDF $\hat{d}$ and plan a dynamically feasible trajectory $\bfp(t)$ from the current robot position $\bfp_s$ to a desired goal $\bfp_g$ that maintains clearance $\hat{d}(\bfp(t)) \geq r$ for all $t \in [0,T]$.

\section{Octree Residual Network for SDF Mapping}
\label{sec:grad_sdf}

We first focus on reconstructing the SDF in \eqref{eq:sdf_definition} from streaming point cloud measurements.
We develop \oren, a hybrid model that combines an explicit octree prior with an implicit neural correction. 
We present an overview of \oren in Fig.~\ref{fig:method_overview}.
An octree data structure stores explicit SDF and gradient estimates, from which a coarse SDF prior is obtained by gradient-augmented interpolation (Sec.~\ref{sec:sdf_prior}). To recover the geometric details that the octree resolution cannot capture, implicit neural features are stored at the octant vertices and are decoded by an MLP into a residual correction of the prior (Sec.~\ref{sec:sdf_residual}). We also decode the implicit features with a second MLP to predict occupancy, which is used to supervise the SDF sign during online training and improve the robustness to sensor noise. The resulting non-truncated SDF estimate $\hat{d}$ can subsequently be used in the UAV trajectory optimization problem in \eqref{eq:planning_problem}.

\subsection{SDF Prior From Octree Interpolation}
\label{sec:sdf_prior}

\oren computes the SDF prior through interpolation of SDF values and gradients stored in an octree with \emph{sparse} and \emph{semi-sparse} layers.
This allows efficient storage compared to a dense, regular grid.
Of the $N$ layers of our octree, the first $M$ layers are designed to be \emph{semi-sparse} in the sense that all siblings of an occupied child octant are created regardless of occupancy. The remaining $N-M$ layers are \emph{sparse}, where only child octants containing surface points are populated. 
This is illustrated in Fig.~\hyperref[fig:method_overview]{2c}.
Each octant vertex $\bfv_k$, with $k \in \{1,\ldots,8\}$, stores learnable estimates $d_k \in \bbR$ and $\bfg_k \in \bbR^3$ of the SDF $d(\bfv_k)$ and its gradient $\nabla d(\bfv_k)$, and vertices are shared across neighboring octants at different tree depths to save memory. The semi-sparse layers cost extra memory but yield a more accurate prior, especially for query positions away from the surface, because creating sibling octants places vertices closer to an arbitrary query $\bfv$ and reduces the interpolation discontinuities at octant boundaries that arise in a purely sparse octree \citep{dai_oren_2025}. 
Using a semi-sparse octree of resolution $\ell$, for a query near the surface we can locate an octant no larger than $\ell \times 2^{N-M}$, while for distant queries a large empty octant suffices for an accurate prior under the gradient-augmented interpolation described next.

\begin{figure*}[t]
	\centering
	\begin{subfigure}[t]{0.16\linewidth}
		\includegraphics[width=\linewidth]{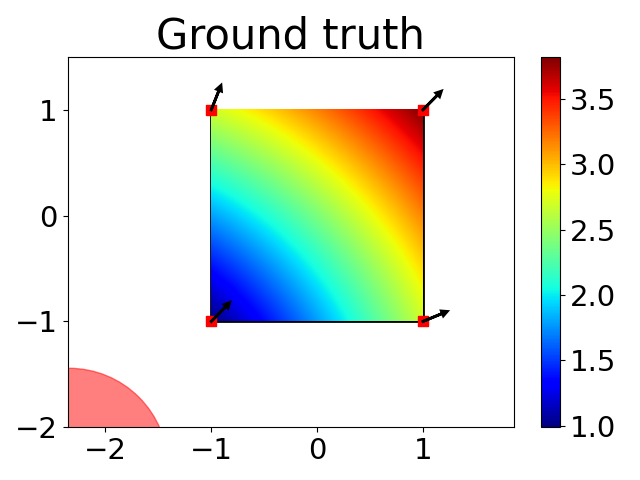}
	\end{subfigure}%
	\hfill%
	\begin{subfigure}[t]{0.16\linewidth}
		\includegraphics[width=\linewidth]{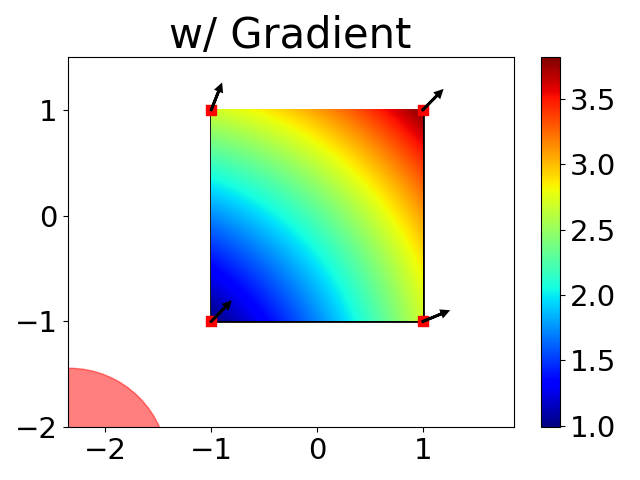}
	\end{subfigure}%
	\hfill%
	\begin{subfigure}[t]{0.16\linewidth}
		\includegraphics[width=\linewidth]{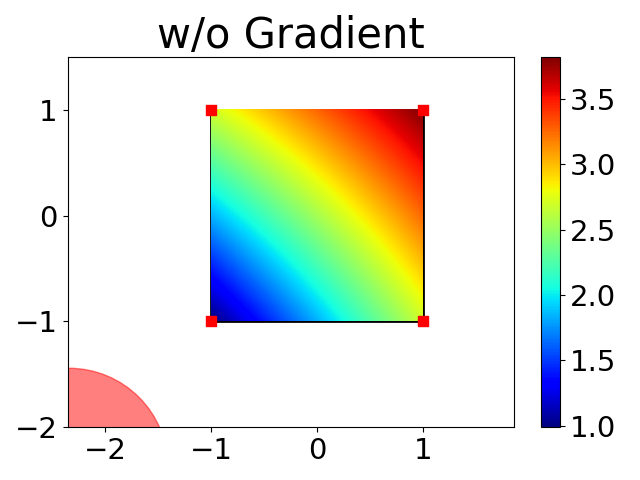}
	\end{subfigure}%
	\hfill%
	\begin{subfigure}[t]{0.16\linewidth}
		\includegraphics[width=\linewidth]{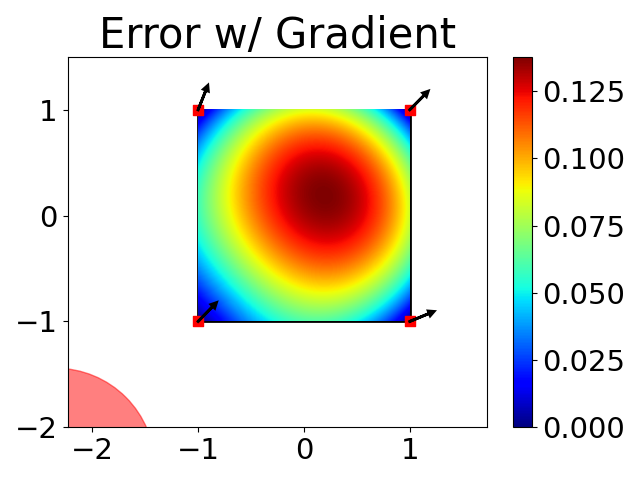}
	\end{subfigure}%
	\hfill%
	\begin{subfigure}[t]{0.16\linewidth}
		\includegraphics[width=\linewidth]{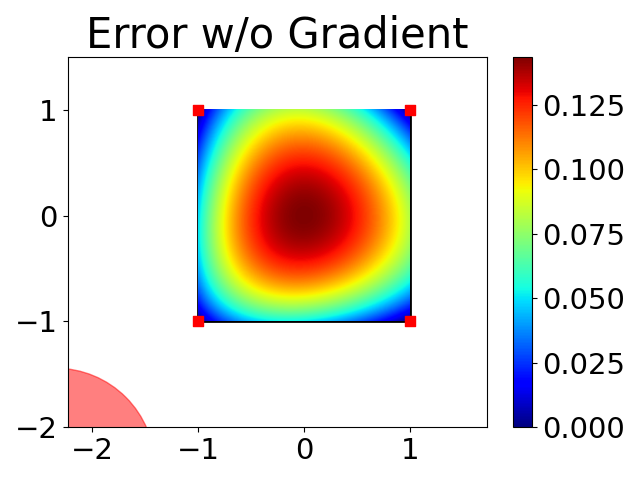}
	\end{subfigure}%
	\hfill%
	\begin{subfigure}[t]{0.16\linewidth}
		\includegraphics[width=\linewidth]{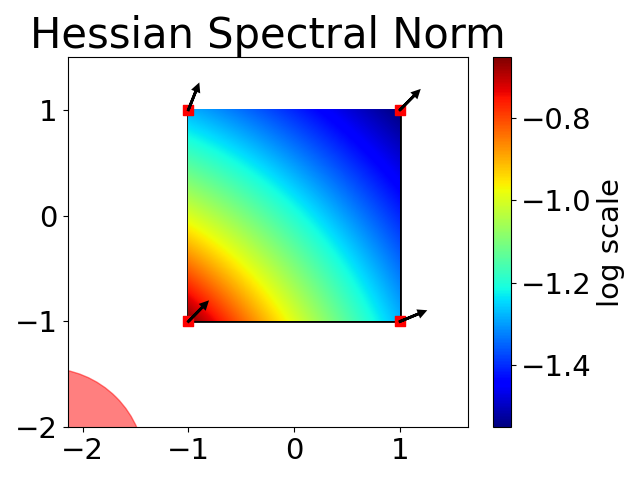}
	\end{subfigure}
	\begin{subfigure}[t]{0.16\linewidth}
		\includegraphics[width=\linewidth]{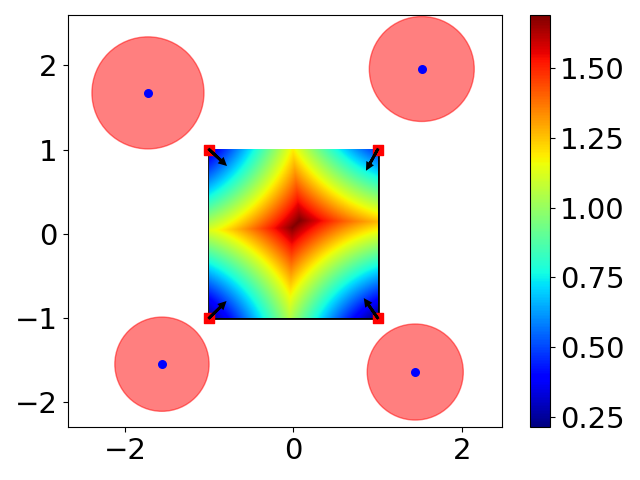}
		\caption{}
		\label{fig:comp_interpolation_a}
	\end{subfigure}%
	\hfill%
	\begin{subfigure}[t]{0.16\linewidth}
		\includegraphics[width=\linewidth]{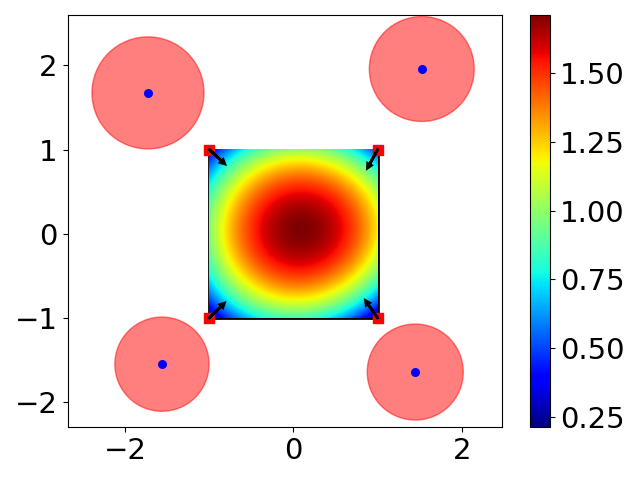}
		\caption{}
		\label{fig:comp_interpolation_b}
	\end{subfigure}%
	\hfill%
	\begin{subfigure}[t]{0.16\linewidth}
		\includegraphics[width=\linewidth]{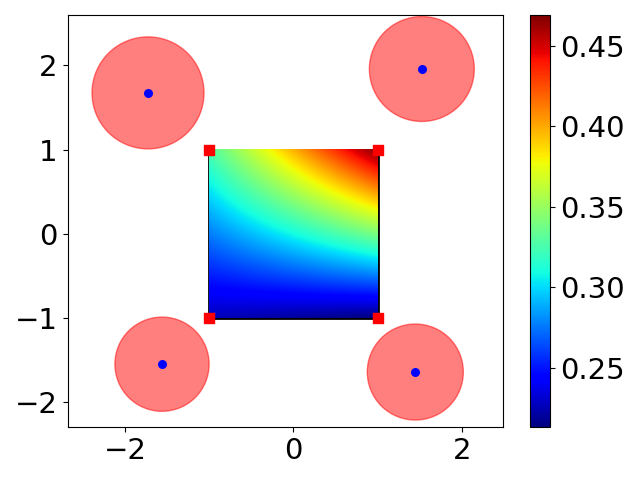}
		\caption{}
		\label{fig:comp_interpolation_c}
	\end{subfigure}%
	\hfill%
	\begin{subfigure}[t]{0.16\linewidth}
		\includegraphics[width=\linewidth]{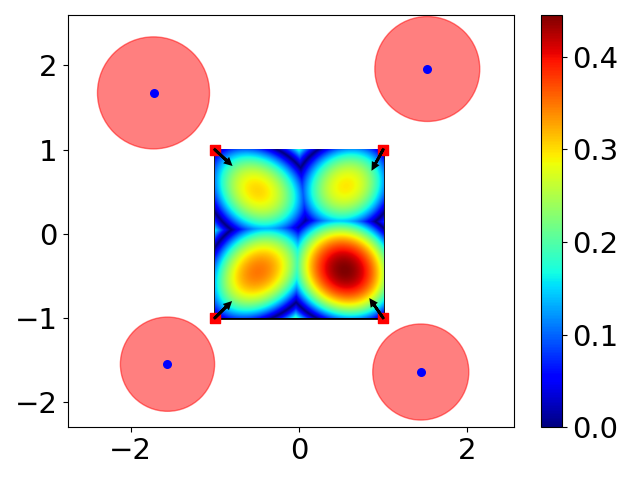}
		\caption{}
		\label{fig:comp_interpolation_d}
	\end{subfigure}%
	\hfill%
	\begin{subfigure}[t]{0.16\linewidth}
		\includegraphics[width=\linewidth]{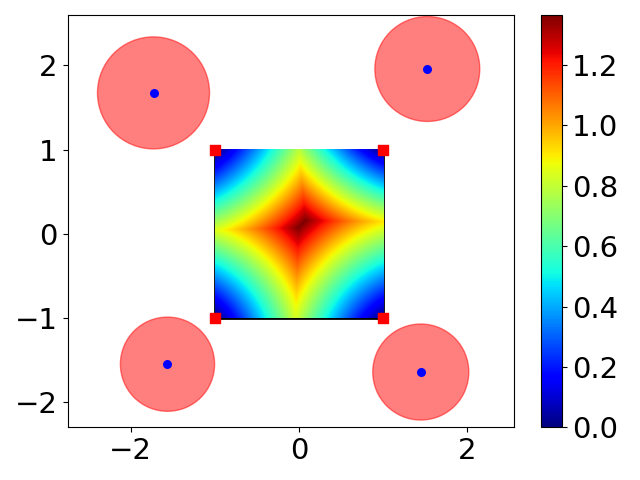}
		\caption{}
		\label{fig:comp_interpolation_e}
	\end{subfigure}%
	\hfill%
	\begin{subfigure}[t]{0.16\linewidth}
		\includegraphics[width=\linewidth]{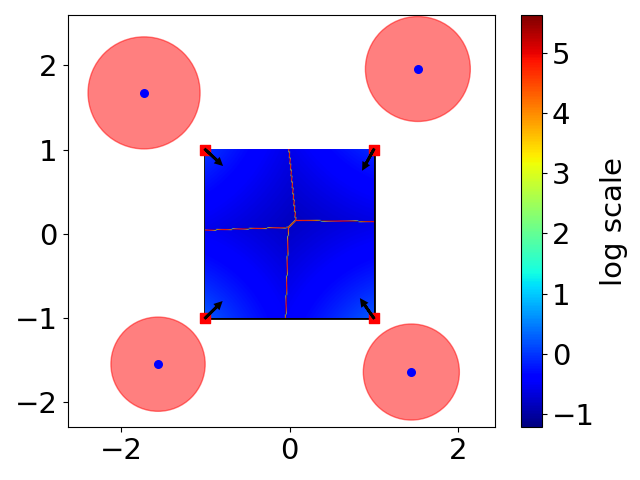}
		\caption{}
		\label{fig:comp_interpolation_f}
	\end{subfigure}
	\caption{\small 2D visualization of interpolation with and without gradient augmentation for one (red region, top row) and four obstacles (red regions, bottom row). Gradient-augmented interpolation produces a better SDF prior (b) with smaller error (d). Empirically, positions where the SDF gradient is not well defined (large Hessian spectral norm), as shown in (f), have small interpolation error with gradient augmentation as shown in (d).}
	\label{fig:comp_interpolation}
\end{figure*}

To produce an accurate enough prior for the residual network to only capture fine details, we use gradient-augmented trilinear interpolation. At the smallest octant containing a query position $\bfv$, we first extrapolate from each vertex $\bfv_k$:
\begin{equation}
	d_k(\bfv) = d_k + \bfg_k^\top (\bfv - \bfv_k),\ k\in\{1,\dots,8\},
\end{equation}
and combine the extrapolations into the gradient-augmented ($ga$) interpolation
\begin{equation} \label{eq:sdf_prior}
	d_{ga}(\bfv) = \frac{1}{\gamma} \sum_{k=1}^8 w_k d_k(\bfv),\ \gamma=\sum_{k=1}^8 w_k,
\end{equation}
where $w_k = 1/|\diag{(\bfv - \bfv_{k})}|$ is the trilinear interpolation weight. Unlike regular trilinear interpolation, which ignores the stored gradients, the gradient-augmented form admits a tighter error upper bound; we present the derivation in \citet{dai_oren_2025}.

Fig.~\ref{fig:comp_interpolation} illustrates the benefit of gradient-augmented interpolation in 2D scenes with one and four obstacles. Each row shows the ground-truth SDF (a), the interpolation results with (b) and without (c) gradient augmentation, the corresponding errors (d, e), and the Hessian spectral norm of the SDF (f). Gradient-augmented interpolation produces smaller errors, as seen in Fig.~\ref{fig:comp_interpolation_d} and \ref{fig:comp_interpolation_e}, and the improvement grows with the number of obstacles. Although the SDF gradient is not well defined on the medial axes, where the Hessian spectral norm is large (Fig.~\ref{fig:comp_interpolation_f}), gradient-augmented interpolation still attains small error there in practice (Fig.~\ref{fig:comp_interpolation_d}).

The prior $d_{ga}(\bfv)$ is thus computed from a semi-sparse octree with learnable SDF and gradient estimates $d_k$ and $\bfg_k$ 
at each vertex, which are
optimized jointly with the residual network. In the experiments, we use the octree configuration of \citet{dai_oren_2025}, with $N=8$ total octree layers with $M=5$ semi-sparse layers, and a resolution of $\ell=10$\,cm.

\subsection{SDF Residual From Implicit Feature Decoding}
\label{sec:sdf_residual}

The accuracy of the SDF prior is limited by the octree resolution, so it lacks geometric detail. To achieve high fidelity, we learn a residual correction with a neural network $R(d_{ga}(\bfv), \bfv; \{\bff_k\}_k, \beta)$ that composes octree feature interpolation $\bff(\bfv) = \sum_k w_k \bff_k$, using implicit neural features $\bff_k \in \bbR^F$ stored at the octree vertices, with an MLP decoder $D_\text{SDF}(d, \bff(\bfv); \beta)$. Each octant vertex is assigned a feature $\bff_k$, initialized to zero and optimized together with the decoder weights $\beta$. Octree expansion automatically allocates more features to the near-surface regions as smaller octants are created, enabling continual learning as the sensor moves. As shown in Fig.~\hyperref[fig:method_overview]{2d}, for a query point $\bfv$ we locate the leaf octant containing it, interpolate the feature $\bff(\bfv) = \sum_{k=1}^8 w_k \bff_k$ with the same weights $w_k$ as in \eqref{eq:sdf_prior}, and decode the SDF residual $\delta_d(\bfv) = D_\text{SDF}(d_{ga}(\bfv), \bff(\bfv); \beta)$. The final SDF prediction combines the prior and the residual,
\begin{equation}
	\hat{d}(\bfv) = d_{ga}(\bfv) + \delta_d(\bfv).
\end{equation}
In our experiments $F=3$ and the MLP has two 32-dimensional hidden layers with LeakyReLU activations.

Extending \citet{dai_oren_2025}, we add a second decoder MLP $D_\text{occ}(\bff(\bfv);\eta)$ to predict the occupancy log-odds $\hat{l}(\bfv)\in\bbR$, a continuous value that can be converted to an occupancy probability as $\sigma(\hat{l}(\bfv))$, where $\sigma(l) := (1 + e^{-l})^{-1}$ is the sigmoid function. Since $\sigma(l) > \frac{1}{2}$ if and only if $l > 0$, a positive prediction $\hat{l}(\bfv)>0$ indicates $\bfv\in\Omega$, and $\hat{l}(\bfv)<0$ indicates $\bfv\not\in\Omega$. Unlike the SDF branch, which decodes a residual correction to the octree prior, the occupancy decoder predicts $\hat{l}(\bfv)$ directly from the interpolated features $\bff(\bfv)$, without a prior.
The occupancy prediction $\hat{l}(\bfv)$ is used to supervise the SDF sign as a regularization term, which significantly improves the robustness to noise. The occupancy labels are obtained by ray-casting the depth measurements. Samples along the ray between the sensor and the measured surface, including samples perturbed to lie in front of the surface, are labeled free ($o = 0$), while samples perturbed to lie behind the surface are labeled occupied ($o = 1$). The sample at the measured surface receives $o = 0.5$, which places the decision boundary $\sigma(\hat{l}) = \frac{1}{2}$ on the surface, aligned with the zero level-set of the SDF.

We jointly train the octree parameters $\theta = \{d_k, \bfg_k, \bff_k\}_k$ and the decoder weights $\beta$ and $\eta$ online following \citet{dai_oren_2025}, 
with the extension of adding loss terms $\calL_\text{occ}$ and $\calL_\text{sign consist.}$ for learning occupancy and enforcing its consistency with the predicted SDF sign:
\begin{align}
\calL_\text{occ}(\hat{l},o)&=\operatorname{BCE}\left(\sigma(\hat{l}), o\right),\\
\calL_\text{sign consist.}(\hat{l}, \hat{d})&=\mathbbm{1}_{|\hat{l}|>\tau}\max\left(0, s\hat{d}\right),
\end{align}
where $\operatorname{BCE}(\cdot,\cdot)$ is the binary cross-entropy loss, $o\in\{0,1\}$ is the ground truth of occupancy (0: free, 1: occupied), $s=\operatorname{sign}\big(\hat{l}\big)$, and $\tau>0$ is a confidence margin such that only occupancy predictions with enough confidence are used. In our experiments, $\tau = 3$, corresponding to $\sigma(\tau) \approx 0.95$, and both loss terms enter the total training loss with unit weight.

Fig.~\ref{fig:method_overview} summarizes \oren. A compact set of key frames is maintained so that it covers the observed surface with little overlap between adjacent frames (Fig.~\hyperref[fig:method_overview]{2a}). From these key frames and the current frame, we generate a dataset of points that are: 1) on the surface, 2) perturbed, and 3) in the free-space (Fig.~\hyperref[fig:method_overview]{2b}). With the generated dataset, the model is optimized with a combination of reconstruction, Eikonal, projection, sign consistency, and occupancy losses (Fig.~\hyperref[fig:method_overview]{2e}). We refer the reader to \citet{dai_oren_2025} for the key-frame criterion, the sampling scheme, and the definitions and hyperparameters of the remaining losses.

The result is a continuously updated implicit map that returns accurate, non-truncated distance and gradient at any query point in the explored workspace. Its continuous clearance and Eikonal regularity \eqref{eq:sdf_constraints} are precisely the properties \bubblestar exploits next, turning signed-distance queries directly into safe, dynamically feasible flight corridors.

\section{Distance-Accelerated Motion Planning}
\label{sec:bubble_star}

In this section, we consider motion planning and trajectory optimization given an SDF representation of the environment. In order to solve \eqref{eq:planning_problem}, we split the problem into two parts: first finding a sequence of bubbles to the goal via global search to minimize path length, followed by local trajectory optimization within the bubble corridor to minimize the objective in \eqref{eq:planning_problem}. Since an SDF representation provides information not only about occupancy but also about the distance to the nearest occupied space, it can be used to accelerate motion planning by reducing the number of collision checks within large free-space regions. We refer to the ball of free space indicated by an SDF query as a \emph{bubble} and develop a new search-based motion planning algorithm, \bubblestar, to plan a sequence of bubbles from the start to the goal. \bubblestar constructs a graph of nodes on a grid, each associated with a bubble, and edges that connect to the bubble boundaries (Sec.~\ref{sec:bubble_star_graph}). Using a heuristic similar to A$^\star$, \bubblestar expands the most promising node at each step by adding the boundary nodes as successors (Sec.~\ref{sec:bubble_star_algorithm}). We prove completeness of \bubblestar under a mild clearance assumption (Sec.~\ref{sec:bubble_star_completeness}). The overlapping bubbles along the recovered path form a safe corridor that we use for trajectory optimization (Sec.~\ref{sec:bubble_star_trajectory}).

\subsection{Bubble Graph Construction}
\label{sec:bubble_star_graph}
For planning, we discretize the 3D environment of Sec.~\ref{sec:problem_statement} using grid resolution $\Delta > 0$ and define the set of grid nodes $\calV := \Delta \bbZ^3 \cap \calR$, where $\bbZ^3$ is the integer lattice in 3D. For each node $\bfp \in \calV$, we define the associated grid cell
$C_\Delta(\bfp) := \bfp + \left[-\frac{\Delta}{2}, \frac{\Delta}{2}\right]^3$,
and the set of all grid cells
$\calC_\Delta := \{ C_\Delta(\bfp) \mid \bfp \in \calV \}$.
We denote by $\calO := \{\, C_\Delta(\bfp) \in \calC_\Delta \mid C_\Delta(\bfp) \cap \Omega \neq \emptyset \,\}$ the occupied cells, i.e., those whose region intersects an obstacle, and by $\calF := \calC_\Delta \setminus \calO$ the free cells. The corresponding set of free grid nodes is
$\calV^\text{free} := \{ \bfp \in \calV \mid C_\Delta(\bfp) \in \calF \}$.
The free region can be represented explicitly using the SDF reconstructed using \oren in Sec.~\ref{sec:grad_sdf}. By computing the SDF $d(\bfc)$ at a node $\bfc \in \calV^\text{free}$, we define an open ball of radius at least $\frac{\Delta}{2}$. 
We refer to this obstacle-free region around the node as a bubble and denote it as $\bbB_\bfc = \bbB(\bfc, d(\bfc))$, where:
\begin{equation}\label{eq:ball}
	\bbB(\bfc,r) := \{ \bfp \in \bbR^3 \mid \norm{\bfp - \bfc}_2 < r\}.\!\!
\end{equation}
For the graph search over grid nodes, we use the free grid nodes that lie inside the bubble: 
\begin{equation}\label{eq:bubble}
	\calB_{\bfc} = \calB(\bfc,d(\bfc)) := \bbB_\bfc \cap \calV^\text{free}.
\end{equation}
\begin{figure}[t]
	\centering
	\includegraphics[width=0.95\linewidth,trim={50pt 5pt 50pt 5pt},clip]
	{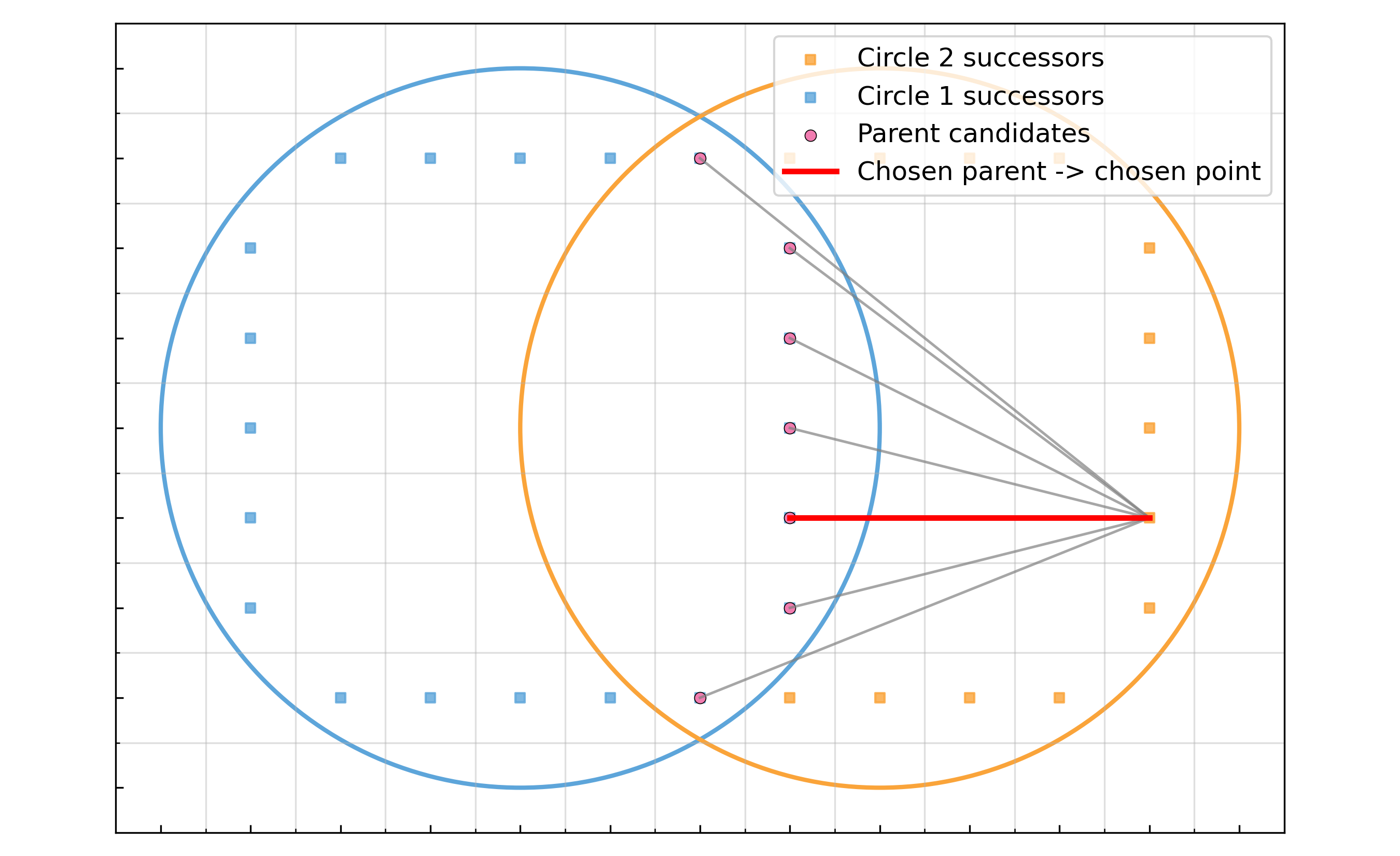}
	\caption{\small \textsc{CalculateSuccessors}: Each successor selects a parent in the current bubble according to the minimum cost-to-come and updates its cost. The OPEN set then contains both existing (blue) and new (orange) nodes.}
	\label{fig:successor-cost-prop}
\end{figure}
We define a graph over the free grid nodes $\calV^\text{free}$, in which each bubble connects the nodes it contains to the nodes on its boundary. The boundary nodes are characterized using the axis-aligned neighbors of a node:
\begin{equation}\label{eq:neighbors}
	\calN_{6}(\bfp) := \{ \bfp \pm \Delta \bfe_i \in \calV \mid i \in \{1, \dots, 3\} \},
\end{equation}
where $\{\bfe_i\}_{i=1}^{3}$ are the standard basis vectors of $\bbR^3$.
The \emph{successor set} of a bubble collects the boundary nodes, those with at least one axis-aligned neighbor \eqref{eq:neighbors} lying outside the bubble, as illustrated in Fig.~\ref{fig:successor-cost-prop}:
\begin{equation} \label{eq:successors}
	S(\calB_{\bfc}) = \left \{ \bfp \in \calB_{\bfc} \; \middle|\; \exists \bfq \in \calN_{6}(\bfp) \text{ s.t. } \bfq \notin \calB_{\bfc} \right\}.\!\!\!
\end{equation}
Two nodes $\bfp, \bfq \in \calV^\text{free}$ are connected by an edge if some bubble \eqref{eq:bubble} contains $\bfp$ and has $\bfq$ on its boundary. That is, $\bfp \in \calB_{\bfc}$ and $\bfq \in S(\calB_{\bfc})$ for some $\bfc \in \calV^\text{free}$. Every edge is collision-free: the bubble $\bbB_\bfc$ \eqref{eq:ball} is obstacle-free and convex, so it contains the straight line segment between the two nodes.

\subsection{\texorpdfstring{\bubblestar Search Algorithm}{Bubble* Search Algorithm}}
\label{sec:bubble_star_algorithm}

\begin{algorithm*}[t]
	\SetAlgoLined
	\LinesNumbered
	\DontPrintSemicolon
	\caption{\bubblestar Planner}\label{alg:bubble-star}
	\SetKwFunction{algo}{\bubblestar}
	\SetKwFunction{calcsucc}{CalculateSuccessors}
	\SetKwFunction{pathto}{PathTo}
	\SetKwInOut{Input}{Input}
	\SetKwProg{bstarprog}{Algorithm}{}{}
	\bstarprog{\algo{$\bfp_s$, $\bfp_g$, $d(\cdot)$, $h(\cdot)$}}{
		$\text{OPEN} \xleftarrow{} \{\bfp_s\}$, \quad $\text{CLOSED} \xleftarrow{} \emptyset$\;
		$g(\bfp_s) = 0$, \quad $g(\bfp) = \infty \ \ \forall \bfp \neq \bfp_s$\;
		\While{$\text{OPEN} \neq \emptyset$}{
			$\bfc \xleftarrow{} \argmin_{\bfp \in \text{OPEN}} \left( g(\bfp) + h(\bfp) \right)$\;
			$\text{OPEN} \xleftarrow{} \text{OPEN} \setminus \{\bfc\}$\;
			$\calB_{\bfc} \xleftarrow{} \calB(\bfc, d(\bfc))$\;
			\If{$\bfc = \bfp_g$}{
				\Return \pathto{$\bfp_g$}\;
			}
			$S_{\bfc} \xleftarrow{} S(\calB_{\bfc})$\;
			\If{$\bfp_g \in \calB_{\bfc}$}{
				$S_{\bfc} \xleftarrow{} S_{\bfc} \cup \{\bfp_g\}$\;
			}
			\calcsucc{$\calB_{\bfc}$, $S_{\bfc}$, $\text{OPEN}$, $\text{CLOSED}$}\;
			\ForEach{$\bfp \in \calB_{\bfc}$ \textnormal{with} $\bfp \notin S_{\bfc}$}{
				$\text{CLOSED} \xleftarrow{} \text{CLOSED} \cup \{\bfp\}$\;
			}
            $\text{CLOSED} \xleftarrow{} \text{CLOSED} \cup \{\bfc\}$\;
		}
		\Return failure\;
	}
    \BlankLine
	\SetKwProg{csprog}{Procedure}{}{}
	\csprog{\calcsucc{$\calB_{\bfc}$, $S_{\bfc}$, $\text{OPEN}$, $\text{CLOSED}$}}{
        \ForEach{$\bfj \in S_{\bfc}$ \textnormal{with} $\bfj \notin \text{CLOSED}$}{
		      $\bfk^\star \xleftarrow{} \argmin_{\bfk \in \text{OPEN} \cap \calB_{\bfc}} \left( g(\bfk) + \norm{\bfj - \bfk}_2 \right)$\;
			$g_{\min} \xleftarrow{} g(\bfk^\star) + \norm{\bfj - \bfk^\star}_2$\label{ln:best_parent}\; 
			\If{$g(\bfj) > g_{\min}$}{
				$g(\bfj) \xleftarrow{} g_{\min}$\;
				$\text{Parent}(\bfj) \xleftarrow{} \bfk^\star$, \quad $\text{Bubble}(\bfj) \xleftarrow{} \calB_{\bfc}$\;
				\uIf{$\bfj \in \text{OPEN}$}{
					update priority of $\bfj$\;
				}
				\Else{
					$\text{OPEN} \xleftarrow{} \text{OPEN} \cup \{\bfj\}$\;
				}
			}
		}
    }

	\BlankLine
	\SetKwProg{pathtoprog}{Procedure}{}{}
	\pathtoprog{\pathto{$\bfp$}}{
		$\text{Path} \xleftarrow{} \{ \bfp \}$\;
        $\text{BubblePath} \xleftarrow{} \{\,\calB(\bfp, d(\bfp))\,\}$\;

		$\text{current} \xleftarrow{} \bfp$\;
		\While{$\text{Parent}(\text{current})$ \textnormal{exists}}{
			$\text{Path} \xleftarrow{} \text{Path} \cup \{\text{current}\}$\;
            $\text{BubblePath} \xleftarrow{} \text{BubblePath} \cup \{\text{Bubble}(\text{current})\}$\;
			$\text{current} \xleftarrow{} \text{Parent}(\text{current})$\;
		}
		\Return $\{\text{Path.reverse()}, \text{BubblePath.reverse()} \}$\;
	}

\end{algorithm*}

We present the \bubblestar planner in Algorithm~\ref{alg:bubble-star}, which performs a heuristic graph search over bubbles to compute a collision-free path from the start node $\bfp_s$ to the goal node $\bfp_g$.
The algorithm expands bubbles constructed from SDF clearance, allowing large collision-free regions to be explored efficiently.  
This greatly reduces the required number of collision checks compared to other search-based planners.

\bubblestar maintains two sets of nodes: an OPEN set containing candidate successor nodes to be expanded, and a CLOSED set containing nodes whose associated bubbles have already been explored.
Each node $\bfp \in \calV^{\text{free}}$ is associated with a cost-to-come value $g(\bfp)$, representing the minimum path length from the start to $\bfp$, a parent pointer Parent$(\bfp)$ recording the predecessor node used to reach $\bfp$, and a bubble pointer, $\text{Bubble}(\bfp)$, recording the bubble that contains the segment from $\bfp$ to Parent$(\bfp)$. Analogous to an A$^\star$ search, the OPEN set is prioritized according to the sum of the cost-to-come and an optional heuristic estimate of the distance to the goal.

At each iteration, \bubblestar selects the node $\bfc \in \text{OPEN}$ with minimum priority and constructs the corresponding bubble $\calB_{\bfc} = \calB(\bfc, d(\bfc))$ defined in \eqref{eq:bubble}, along with its successor set $S_\bfc = S(\calB_{\bfc})$ in \eqref{eq:successors}. We refer to this as \emph{expanding} $\bfc$. If $\bfc = \bfp_g$, the search terminates. Otherwise, if the goal lies within $\calB_{\bfc}$, it is appended to the current successor set $S_{\bfc}$, so that the termination condition $\bfc = \bfp_g$ holds in a future iteration. The \textsc{CalculateSuccessors} procedure in Algorithm~\ref{alg:bubble-star} processes each successor $\bfj \in S_{\bfc}$: it selects the predecessor $\bfk^\star$ among nodes inside $\calB_{\bfc}$ that minimizes $g(\bfk) + \norm{\bfj - \bfk}_2$ and updates $g(\bfj)$, Parent$(\bfj)$, and Bubble$(\bfj)$ whenever this improves the cost-to-come. Interior nodes of $\calB_{\bfc}$ and its center $\bfc$ are then marked as CLOSED. This procedure assigns each successor node of the new bubble a parent node according to Line~\ref{ln:best_parent} of Algorithm~\ref{alg:bubble-star}.

The search terminates when the goal $\bfp_g$ is drawn from the open set (success), or if the open set is empty (failure). In the successful case, the parent pointers are followed to reconstruct a sequence of overlapping bubbles connecting the start and goal, which is done in the \textsc{PathTo} procedure. If the OPEN set becomes empty before reaching the goal, the algorithm reports failure.

Because the parent candidate $\bfk^\star$ may be any OPEN node inside the bubble, 
the recovered path can skip intermediate nodes unlike A$^\star$. \bubblestar therefore produces any-angle paths in that the nodes on the path are connected through one straight segment if they belong to the same bubble.
Therefore, the resulting paths are no longer than the shortest path restricted to the grid and are typically shorter.

\subsection{Termination, Completeness, and Failure Detection}
\label{sec:bubble_star_completeness}

We now establish formal guarantees for \bubblestar: it always terminates, it returns a path whenever one with sufficient clearance exists, and it reports failure otherwise. Throughout, we use the fact that the search space is bounded by $\calR$, so there are finitely many nodes $\calV$, and that the start and goal are free grid nodes, $\bfp_s, \bfp_g \in \calV^\text{free}$.
We assume the heuristic $h$ is consistent, i.e., $h(\bfp_g) = 0$ and $h(\bfp) \le \norm{\bfp - \bfp'}_2 + h(\bfp')$ for adjacent $\bfp, \bfp'$, which holds for the Euclidean distance $h(\bfp) = \norm{\bfp - \bfp_g}_2$ used in our experiments.

\begin{definition}[Clear grid path]\label{def:clear_grid_path}
    Let $\calV^\text{free}$ be a set of free nodes with resolution $\Delta$. A \emph{grid path} from $\bfp_s$ to $\bfp_g$ is a finite sequence $(\bfp_0, \bfp_1, \dots, \bfp_N) \subset \calV^\text{free}$ with $\bfp_0 = \bfp_s$, $\bfp_N = \bfp_g$, and $\bfp_{j+1} \in \calN_{6}(\bfp_j)$ for all $j$. It is \emph{clear} if the distance $d(\bfp_j) \ge 1.5\Delta$ for every $j$.
\end{definition}

The adjacency condition makes the sequence a connected walk on the grid, while the $1.5\Delta$ clearance ensures each node's bubble contains all of its grid neighbors. Because the axis-aligned neighbors lie at distance $\Delta$ and bubbles are open balls, the radius must satisfy $d(\bfp_j) > \Delta$. Among the clearance values admissible under our grid-occupancy model, $1.5\Delta$ is the smallest. The completeness proof below relies on this property.

\begin{theorem}[Termination]
	\label{thm:termination}
	\bubblestar (Algorithm~\ref{alg:bubble-star}) terminates after finitely many iterations.
\end{theorem}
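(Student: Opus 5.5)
The plan is to bound the number of iterations of the main \textbf{while} loop by the number of grid nodes $|\calV|$, which is finite because $\calR$ is bounded. Every iteration removes exactly one node $\bfc$ from OPEN and, unless it returns, inserts $\bfc$ into CLOSED at its end. The loop ends either by returning at $\bfc=\bfp_g$ or because OPEN is empty. It therefore suffices to show that no node is selected as $\bfc$ twice, since then there are at most $|\calV|$ expansions.

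\textbf{Step 1: OPEN contains only grid nodes.} By induction, OPEN is always a subset of $\calV$. Initially it is $\{\bfp_s\}$. Later insertions occur only in \textsc{CalculateSuccessors}, for nodes $\bfj\in S_{\bfc}\subseteq\calB_{\bfc}\subseteq\calV^\text{free}$, or for $\bfj=\bfp_g$, which is inserted only when $\bfp_g\in\calB_{\bfc}$. The set CLOSED is only ever enlarged.

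\textbf{Step 2: a closed node never re-enters OPEN.} \textsc{CalculateSuccessors} skips every $\bfj\in\text{CLOSED}$, and it is the only place where nodes are added to OPEN or have their $g$ lowered. So once a node is in CLOSED, its $g$ value is frozen and it is never reinserted into OPEN.

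\textbf{Step 3: the expanded node is not reinserted during its own iteration.} The one delicate case is the iteration that expands $\bfc$ itself. Here $\bfc$ has already left OPEN but is not yet in CLOSED when \textsc{CalculateSuccessors} runs. It may lie in $S_{\bfc}$, for example if $d(\bfc)\le\Delta$, so that some axis-aligned neighbor lies outside the bubble. Reinsertion would then require some $\bfk\in\text{OPEN}\cap\calB_{\bfc}$ with $g(\bfk)+\norm{\bfc-\bfk}_2<g(\bfc)$.

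I would rule this out using the selection rule together with consistency of $h$. Because $\bfc$ minimized the priority, $g(\bfk)+h(\bfk)\ge g(\bfc)+h(\bfc)$. Consistency, chained along grid paths, or directly the triangle inequality for the Euclidean heuristic used here, gives $h(\bfc)\le\norm{\bfc-\bfk}_2+h(\bfk)$. Combining the two yields $g(\bfk)+\norm{\bfc-\bfk}_2\ge g(\bfc)$, so the strict test $g(\bfc)>g_{\min}$ fails and $\bfc$ is not reinserted. This is the step I expect to be the main obstacle, because consistency is stated only for adjacent nodes and must be extended to arbitrary pairs inside a bubble.

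As a fallback that needs no heuristic, I would modify the argument slightly: observe that $\bfc$ is added to CLOSED at the end of the iteration. Even if it were momentarily reinserted, any later selection of it would be a re-expansion of a node with frozen $g$. I would then replace the counting by a potential, namely the pair consisting of the number of non-closed nodes and the size of OPEN, which decreases lexicographically with each iteration.

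\textbf{Conclusion.} By Steps 2 and 3, each node is removed from OPEN at most once after it first enters, so there are at most $|\calV|<\infty$ iterations. After these, the algorithm either has returned \textsc{PathTo}$(\bfp_g)$ or OPEN is empty and it returns failure. Finally, \textsc{PathTo} itself terminates because parent pointers form no cycle. Each pointer is set to a node $\bfk^\star$ that was in OPEN and not closed at that time, and a node's parent is never reassigned after that node is closed. Hence following parent pointers from $\bfp_g$ reaches $\bfp_s$ in finitely many steps.
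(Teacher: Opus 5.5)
Your main argument is correct, but it reaches the key step by a different route from the paper. The paper gets ``no node is expanded twice'' by citing the A$^\star$ result that, under a consistent heuristic, $g(\bfc)$ is already optimal when $\bfc$ is expanded. You instead read the non-reinsertion of closed nodes directly off \textsc{CalculateSuccessors}, which skips CLOSED. You then use the heuristic only for the one window in which $\bfc$ is in neither OPEN nor CLOSED, through a one-step priority inequality. Your route is more elementary and better suited to \bubblestar. There, interior nodes are closed without ever being expanded, and parents are restricted to OPEN nodes of the current bubble, so the imported A$^\star$ optimality statement is not literally available. The paper's route is shorter and ties the result to classical A$^\star$ theory.

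A few points need repair.
\begin{itemize}
\item \emph{Step~3.} The selection rule gives $g(\bfk)+h(\bfk)\ge g(\bfc)+h(\bfc)$ only for nodes that were in OPEN, with their old $g$, when $\bfc$ was selected. A $\bfk$ inserted or lowered earlier in the same \textsc{CalculateSuccessors} loop needs a one-line induction: $g(\bfj)+h(\bfj)=g(\bfk^\star)+\norm{\bfj-\bfk^\star}_2+h(\bfj)\ge g(\bfk^\star)+h(\bfk^\star)$.
\item \emph{The obstacle you anticipated.} It is not really there. $\bfk\in\calB_{\bfc}$ and $\bfc\in S(\calB_{\bfc})$ are joined by an edge of the bubble graph, so consistency applies to this pair directly; for the Euclidean $h$ it is simply the triangle inequality. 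Chaining 6-neighbour consistency along grid paths would give only $h(\bfc)\le\norm{\bfc-\bfk}_1+h(\bfk)$, which is too weak.
\item \emph{Fallback potential.} It need not decrease. A node can be closed as an interior node of a later bubble while still sitting in OPEN. When it is then popped, its bubble may close nothing new yet insert a fresh successor, which leaves the pair unchanged. The correct heuristic-free argument is simpler: a node can re-enter OPEN only during the iteration that pops it, since it is in CLOSED from then on. So each node is popped at most twice, giving at most $2|\calV|$ iterations. This also shows termination itself does not need consistency.
\item \emph{\textsc{PathTo}.} For the same reason, $\bfk^\star$ may already be in CLOSED when it is chosen as a parent. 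Acyclicity follows instead from the invariant $g(\bfj)\ge g(\text{Parent}(\bfj))+\norm{\bfj-\text{Parent}(\bfj)}_2>g(\text{Parent}(\bfj))$.
\end{itemize}
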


\begin{proof}
	The grid $\calV$ is finite and each bubble is determined by its center, so there are finitely many bubbles. With a consistent heuristic, the cost-to-come $g(\bfc)$ is optimal when $\bfc$ is expanded \citep{heuristic_basis}, so each node is expanded at most once and is never reinserted into OPEN after entering CLOSED. Each iteration removes one node from OPEN and inserts finitely many successors, so after at most $|\calV|$ expansions OPEN is empty or the goal is returned.
\end{proof}

\begin{theorem}[Completeness]
	\label{thm:completeness}
	If a clear grid path from $\bfp_s$ to $\bfp_g$ exists, then \bubblestar (Algorithm~\ref{alg:bubble-star}) returns a valid path.
\end{theorem}

\begin{proof}
	By Theorem~\ref{thm:termination}, the search terminates, so it either returns a path or reports failure. Suppose, for contradiction, that it reports failure. Then, OPEN is empty at termination. Let $(\bfp_0, \dots, \bfp_N)$ be the clear grid path from the hypothesis, with $\bfp_0 = \bfp_s$, $\bfp_N = \bfp_g$, $\bfp_{j+1} \in \calN_{6}(\bfp_j)$, and $d(\bfp_j) \ge 1.5\Delta$ for all $j$. We prove by induction that every $\bfp_j$ lies in some bubble expanded during the search.

	For the base case, $\bfp_0 = \bfp_s$ is placed in OPEN at initialization. Since OPEN is empty at termination, it was popped and its bubble expanded, so $\bfp_0$ lies in an expanded bubble. For the inductive step, suppose $\bfp_j$ lies in an expanded bubble $\calB_\bfc$.
	If $\bfp_j$ is interior to $\calB_\bfc$, that is $\bfp_j \notin S(\calB_\bfc)$, then by \eqref{eq:successors} all grid neighbors of $\bfp_j$, including $\bfp_{j+1}$, lie in $\calB_\bfc$.

	Otherwise $\bfp_j$ lies on the boundary, $\bfp_j \in S(\calB_\bfc)$, and was inserted into OPEN. Since OPEN is empty, it was also popped and its own bubble $\calB_{\bfp_j}$ is expanded. Because $d(\bfp_j) \ge 1.5\Delta > \Delta$ and $\bfp_{j+1}$ is a grid neighbor at distance $\Delta$, we have $\bfp_{j+1} \in \calB_{\bfp_j}$. In either case $\bfp_{j+1}$ lies in an expanded bubble, completing the induction.

	In particular, the goal $\bfp_N = \bfp_g$ lies in some expanded bubble $\calB_\bfc$. The goal test $\bfp_g \in \calB_\bfc$ in Algorithm~\ref{alg:bubble-star} then inserts $\bfp_g$ into the successor set, so \textsc{CalculateSuccessors} inserts it into OPEN with finite cost. Since OPEN is empty at termination, $\bfp_g$ must have been popped. When a node equal to $\bfp_g$ is popped, the algorithm returns \textsc{PathTo}$(\bfp_g)$. This contradicts the failure assumption, hence, \bubblestar returns a valid path.
\end{proof}

\begin{corollary}[Failure detection]
	\label{cor:failure}
	\bubblestar (Algorithm~\ref{alg:bubble-star}) reports failure in finite time and only when no clear grid path connects $\bfp_s$ and $\bfp_g$.
\end{corollary}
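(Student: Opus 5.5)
The corollary combines Theorem~\ref{thm:termination} and Theorem~\ref{thm:completeness}, and I will prove its two claims separately.

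\emph{Failure is reported in finite time.} By Theorem~\ref{thm:termination}, Algorithm~\ref{alg:bubble-star} stops after finitely many iterations. It can stop in only two ways: by returning \textsc{PathTo}$(\bfp_g)$ when the goal is popped, or by exiting the while loop with OPEN empty and returning failure. So whenever failure is reported, this happens after at most $|\calV|$ expansions. Each expansion does a finite amount of work: one SDF query, enumeration of the finitely many grid nodes in $\calB_{\bfc}$, and \textsc{CalculateSuccessors} over a finite set. Hence the total running time is finite, not merely the number of iterations.

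\emph{Failure is reported only when no clear grid path exists.} This is the contrapositive of Theorem~\ref{thm:completeness}. Suppose failure is reported and, for contradiction, a clear grid path from $\bfp_s$ to $\bfp_g$ exists. Theorem~\ref{thm:completeness} then says the algorithm returns a valid path. The algorithm is deterministic on a given input, and its two exits are mutually exclusive. So it cannot both return a path and report failure, which is a contradiction. Therefore no clear grid path connects $\bfp_s$ and $\bfp_g$.

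The main point needing care is that ``returns a valid path'' in Theorem~\ref{thm:completeness} really excludes the failure branch. I will state explicitly that the return statements are the only exits. Once the goal is popped, the function returns before the loop condition is checked again, so the failure line is never reached. I will also note the direction of the claim: the corollary does not assert that failure implies the absence of \emph{every} collision-free path. It only asserts the absence of a clear grid path with $1.5\Delta$ clearance, which is exactly the hypothesis of Theorem~\ref{thm:completeness}.
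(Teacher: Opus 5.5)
Your proposal is correct and follows the same route as the paper, which also obtains the corollary by combining Theorem~\ref{thm:termination} (finite termination) with Theorem~\ref{thm:completeness} (completeness). Your explicit use of the contrapositive of completeness is the right reading of ``only when.'' The paper's one-line proof phrases its conclusion in the converse direction (``if no path exists, the algorithm terminates with failure''), and that direction would rest on soundness of returned paths rather than on completeness.
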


\begin{proof}
	Since Theorem~\ref{thm:termination} shows termination in finite time and Theorem~\ref{thm:completeness} shows completeness, it follows that if no path exists, the algorithm terminates with failure in finite time.
\end{proof}

\subsection{Trajectory Optimization Within Bubble Corridor}
\label{sec:bubble_star_trajectory}

Given the sequence of bubbles returned by \bubblestar, we compute a trajectory to solve \eqref{eq:planning_problem} that lies within the convex bubble corridor.
Pairs of consecutive bubbles define a convex overlap region through which the trajectory may pass. These overlap regions provide a sequence of safe sets that constrain the trajectory while preserving sufficient freedom for optimization.

\paragraph*{Bubble Overlap Construction}
\bubblestar returns a sequence of bubbles \eqref{eq:ball} along the path, which we index as $\bbB_{i} := \bbB(\bfc_i, r_i)$ with centers $\bfc_i \in \bbR^3$ and radii $r_i := d(\bfc_i) > 0$. Consecutive bubbles $\bbB_i$ and $\bbB_{i+1}$ overlap by construction, since \bubblestar generates each bubble center inside its predecessor. Their intersection is a lens whose widest cross-section is a disk in the plane perpendicular to the line between centers $\bfa_i := \bfc_{i+1} - \bfc_i$. We use this disk as a convex safe set linking the two bubbles. Its offset along $\bfa_i$, center, and radius are:
\begin{equation}\label{eq:overlap_geometry}
	\begin{aligned}
		\lambda_i & = \frac{1}{2} + \frac{r_i^2 - r_{i+1}^2}{2 \norm{\bfa_i}^2}, \\
		\bfo_i    & = \bfc_i + \lambda_i \bfa_i,                                 \\
		\rho_i    & = \sqrt{r_i^2 - \lambda_i^2 \norm{\bfa_i}^2}.
	\end{aligned}
\end{equation}
Let $\bfnu_i = \bfa_i / \norm{\bfa_i}$ be the unit normal of the disk's plane and $B_i = \begin{bmatrix} \bfe_1^i & \bfe_2^i \end{bmatrix} \in \bbR^{3 \times 2}$ an orthonormal basis of that plane. With the center $\bfo_i$ and radius $\rho_i$ from \eqref{eq:overlap_geometry}, the widest overlapping cross-section is the disk
\begin{equation}\label{eq:overlap_region}
	\begin{aligned}
		\calS_i & = \{\, \bfo_i + B_i \bfw \mid \bfw \in \bbR^2,\ \norm{\bfw}_2 \le \rho_i \,\} \\
		        & \subset \bbB_i \cap \bbB_{i+1}.
	\end{aligned}
\end{equation}

\paragraph*{MINCO Trajectory Representation}
We build upon the MINCO representation~\citep{MINCO}, which parameterizes a piecewise polynomial trajectory using intermediate waypoints and segment durations. Let the trajectory consist of $N$ polynomial segments. MINCO parameterizes the trajectory using spatial variables $\bfxi = \{\bfxi_i\}_{i=1}^{N-1}$ and temporal variables $\bftau = \{\tau_i\}_{i=1}^{N}$, which determine the intermediate waypoints and segment durations, respectively. These variables are mapped to physical waypoints $\bfq = \bfq(\bfxi)$ and segment durations $\bfT = \bfT(\bftau)$, and the optimal polynomial coefficients $\bfc$ are uniquely determined from $(\bfq, \bfT)$.

The trajectory optimization problem is formulated as
\begin{equation}
	\min_{\bfxi, \bftau}
	\calJ(\bfq(\bfxi), \bfT(\bftau)) \quad \text{s.t.} \quad \bfq_i(\bfxi_i) \in \calS_i,\;\forall i,
\end{equation}
where $\calJ$ includes the minimum-snap and time penalties of \eqref{eq:planning_problem} as well as corridor violation penalties. The dynamics constraints are penalized through the cost, following \citet{MINCO}. The clearance constraint in \eqref{eq:planning_problem} is enforced by shrinking the radii of all bubbles in the corridor by the required clearance $r$, and the rest of the constraints can be written naturally in the MINCO problem formulation, so the optimization is \eqref{eq:planning_problem} restricted to the MINCO polynomial class.

We design a smooth unconstrained parameterization that automatically satisfies the constraint of each waypoint lying within its corresponding overlap region of \eqref{eq:overlap_region}, i.e., $\bfq_i \in \calS_i$.
Specifically, we map unconstrained variables $\bfxi_i \in \bbR^2$ to constrained waypoints $\bfq_i$ as:
\begin{equation}
	\begin{aligned}
		\bfu_i & = \frac{B_i \bfxi_i}{\sqrt{1 + \norm{B_i \bfxi_i}^2}}, \\
		\bfq_i & = \bfo_i + \rho_i \bfu_i.
	\end{aligned}
\end{equation}
Segment durations must satisfy the positivity constraint $T_i > 0$.
Following the temporal constraint elimination scheme of MINCO~\citep{MINCO}, we introduce unconstrained temporal variables $\tau_i \in \bbR$ and define the mapping:
\begin{equation}
	T_i =
	\begin{cases}
		\frac{\tau_i^2}{2} + \tau_i + 1,                    & \tau_i > 0,   \\
		\left(\frac{1}{2}\tau_i^2 - \tau_i + 1\right)^{-1}, & \tau_i \le 0,
	\end{cases}
\end{equation}
which ensures positivity while preserving differentiability. Gradients with respect to $\tau_i$ are computed using the chain rule in conjunction with the gradients provided by the MINCO formulation.

\paragraph*{Gradient-Based Optimization}

MINCO provides gradients of the cost function with respect to waypoints and durations, $\partial \calJ / \partial \bfq$ and $\partial \calJ / \partial \bfT$. These gradients are propagated through the spatial parameterization using
\begin{equation}
	\frac{\partial \calJ}{\partial \bfxi_i}
	=
	B_i^\top \rho_i
	\left(
	\alpha I_{3 \times 3} - \alpha \bfu_i \bfu_i^\top
	\right)
	\frac{\partial \calJ}{\partial \bfq_i},
\end{equation}
where $\alpha = (1 + \norm{B_i \bfxi_i}^2)^{-1/2}$.
To improve convergence, we first prune the bubbles whose waypoints $\bfq_i$ make no progress toward the goal, and initialize the remaining waypoints at the centers of their overlap regions $\bfo_i$. The optimization then runs in two stages:
\begin{enumerate}
	\item Optimize the spatial variables $\bfxi$ with the durations $\bftau$ fixed to obtain a collision-free geometric path.
	\item Jointly optimize $\bfxi$ and $\bftau$ to obtain a dynamically feasible trajectory.
\end{enumerate}
This two-stage optimization produces smooth dynamically feasible trajectories through the sequence of bubbles from the \bubblestar algorithm.

\section{Evaluation}
\label{sec:evaluation}

Our central claim is that combining \oren and \bubblestar yields an efficient integrated mapping, planning, and trajectory optimization approach for UAVs that runs fully onboard.
To support this claim, we first show that \oren provides an accurate continuous SDF representation with sufficient computational efficiency to permit real-time operation in simulated large-scale environments. 
Our prior work \citep{dai_oren_2025} compares \oren extensively against SDF mapping baselines, and Sec.~\ref{sec:sdf-comparison} summarizes those results. The key remaining comparison is against the mapping approaches commonly used in autonomous flight, and we evaluate \oren in four large simulated environments against OctoMap \citep{hornung13auro}, a widely used mapping library for quadrotors (Sec.~\ref{sec:eval_sim_mapping}). 
Then, building on the SDF reconstruction from \oren, another set of experiments shows that \bubblestar is more efficient than grid-based A$^\star$ and sampling-based (RRT, RRT$^\star$) planners (Sec.~\ref{sec:eval_planning}).
Finally, we evaluate our approach on a real quadrotor with a Jetson Orin NX computer to show that it runs efficiently and fully onboard (Sec.~\ref{sec:eval_onboard}).

\subsection{Comparison with SDF Mapping Methods}
\label{sec:sdf-comparison}

In our conference paper \citep{dai_oren_2025}, \oren is evaluated on Replica \citep{replica19arxiv} and Newer College \citep{newercollege2021} datasets, in comparison with four baselines \htwomapping \citep{jiang_h2-mapping_2023}, PIN-SLAM \citep{pan_pin-slam_2024}, HIO-SDF \citep{hio-sdf_2024} and Voxblox \citep{oleynikova_voxblox_2017}. \oren generates mesh results of quality similar to the baselines' for small indoor Replica scenes and synthesized depth data. However, on the large outdoor Newer College scene with real LiDAR measurements, \oren reconstructs a much better mesh, as shown in Table~\ref{tab:mesh_metrics_newercollege}. On the Newer College dataset, \oren improves over the best baselines by 29--68\% across all seven mesh metrics (Completion $+36.2\%$, Completion Ratio $+29.3\%$, Recall $+33.5\%$, Precision $+40.3\%$, F1 $+37.0\%$, Chamfer-L1 $+56.7\%$, Accuracy $+68.3\%$), averaging $\approx 43.1\%$.
As for SDF metrics, \oren outperforms the baselines by a larger margin on both Replica and Newer College datasets. As shown in Table~\ref{tab:sdf_metrics_improvement}, \oren provides at least $15\%$ more accurate SDF than the baselines, which is an essential factor for the success of our \bubblestar planner.

\begin{table*}[t]
    \centering
    \caption{\small Mesh reconstruction metrics with $\delta=20$~cm on the Newer College dataset \citep{newercollege2021}.}
    \label{tab:mesh_metrics_newercollege}
    \begin{tabular}{l|ccccc}
    \hline
    Metric & \oren & \htwomapping & PIN-SLAM & HIO-SDF & Voxblox \\
    \hline
    Completion [cm] $\downarrow$        & \textbf{10.66} & 21.94 & \underline{16.71} & 72.86  & 21.30 \\
    Completion Ratio [$<\delta$]\% $\uparrow$    & \textbf{94.20} & 61.58 & \underline{72.83} & 10.05  & 60.31 \\
    Recall [$<\delta$]\% $\uparrow$              & \textbf{93.99} & 57.96 & \underline{70.40} &  4.72  & 56.64 \\
    Precision [$<\delta$]\% $\uparrow$           & \textbf{90.69} & 52.97 & \underline{64.63} &  4.46  & 51.84 \\
    F1 Score [$<\delta$]\% $\uparrow$            & \textbf{92.31} & 55.35 & \underline{67.39} &  4.59  & 54.14 \\
    Chamfer-L1 Distance [cm] $\downarrow$ & \textbf{9.36} & 28.40 & \underline{21.64} & 422.29 & 23.37 \\
    Accuracy [cm] $\downarrow$          & \textbf{8.07}  & 34.86 & 26.58 & 771.72 & \underline{25.44} \\
    \hline
    \end{tabular}
\end{table*}

\begin{table}[t]
  \centering
  \caption{\small Average improvement of \oren over other SDF estimation methods (the best baseline for each scene and metric). SDF MAE is the MAE of the predicted signed distance, and Gradient MAE is the MAE of the angle between the predicted and ground-truth SDF gradients.}
  \label{tab:sdf_metrics_improvement}
  \setlength{\tabcolsep}{4pt}
  \begin{tabular*}{\columnwidth}{@{\extracolsep{\fill}}llr}
    \toprule
    Metric & Region & Avg.\ $\uparrow$ \\
    \midrule
    \multirow{3}{*}{SDF MAE}  & All  & 22.0\% \\
                              & Near & 34.5\% \\
                              & Far  & 15.0\% \\
    \midrule
    \multirow{3}{*}{Gradient MAE} & All  & 35.5\% \\
                                & Near & 54.1\% \\
                                & Far  & 29.9\% \\
    \bottomrule
  \end{tabular*}
\end{table}

\subsection{Mapping in Large Simulated Environments}
\label{sec:eval_sim_mapping}
The comparisons in Sec.~\ref{sec:sdf-comparison} establish \oren's accuracy against dedicated SDF mapping methods on standard benchmarks. This section evaluates the performance of \oren in an online setting with data streaming from a UAV in large simulated environments. 

We build four simulation environments, a forest, an underground garage, an industrial site, and a warehouse, and simulate a quadrotor flying through each, carrying a single onboard depth camera and building the map online from the streaming point clouds.
These scenes form our testbed for the mapping-and-planning approach. Here, we evaluate the map representation 
and Sec.~\ref{sec:eval_planning} evaluates planning in the same environments.
With our focus being on online deployment of autonomous robots, we compare against OctoMap~\citep{hornung13auro}, an occupancy-grid mapping method widely used on compute-constrained platforms. \oren matches OctoMap's occupancy prediction quality at a comparable update rate while providing the continuous distances that accelerate planning while recovering visually better surface meshes. 

\begin{table}[tbp]
	\centering
	\caption{\small Comparison of \oren and OctoMap on $4$ simulated environments. Both methods are compared against a ground-truth surface point cloud.}
	\label{tab:sim_map_comparison}
	\footnotesize
\begin{tabular*}{\columnwidth}{@{\extracolsep{\fill}}llrrr}
	\hline
	Env & Method & Precision & Recall & F1 \\
	\hline
	\multirow{2}{*}{Warehouse}
	& \oren & $0.672$ & $\mathbf{0.884}$ & $\mathbf{0.764}$ \\
	& OctoMap & $\mathbf{0.780}$ & $0.629$ & $0.696$ \\
	\hline
	\multirow{2}{*}{Garage}
	& \oren & $0.862$ & $\mathbf{0.911}$ & $\mathbf{0.886}$ \\
	& OctoMap & $\mathbf{0.888}$ & $0.819$ & $0.852$ \\
	\hline
	\multirow{2}{*}{Forest}
	& \oren & $0.652$ & $\mathbf{0.948}$ & $\mathbf{0.773}$ \\
	& OctoMap & $\mathbf{0.900}$ & $0.384$ & $0.539$ \\
	\hline
	\multirow{2}{*}{Industrial}
	& \oren & $0.556$ & $\mathbf{0.901}$ & $0.688$ \\
	& OctoMap & $\mathbf{0.787}$ & $0.646$ & $\mathbf{0.710}$ \\
	\hline
\end{tabular*}
\end{table}

Table~\ref{tab:sim_map_comparison} reports occupancy prediction quality with the \emph{Precision}, \emph{Recall}, and \emph{F1} metrics common in the mapping literature \citep{jiang_h2-mapping_2023, pan_pin-slam_2024, dai_oren_2025}, evaluated against the ground-truth surface point cloud. \oren achieves better or comparable results across the four environments while additionally providing distance information and gradients. Fig.~\ref{fig:mesh_compare_sim} visualizes the reconstructed surface meshes side by side in each scene, the marching-cubes mesh of the zero level-set of \oren's SDF and the faces of OctoMap's occupied cells.

OctoMap builds a discrete occupancy grid of the free and occupied voxels at a fixed resolution. \oren instead represents the scene as a continuous SDF that provides the signed distance and gradient at every point (Sec.~\ref{sec:sdf_residual}). Throughout the evaluation, both the occupancy predictions and the reconstructed meshes are derived from the SDF estimate $\hat{d}$ alone. The occupancy decoder could provide occupancy directly, but we choose to measure the representational power of the SDF itself, though the occupancy information is used during online training to improve the SDF training. Compared to OctoMap, \oren produces a more accurate and complete reconstruction of the scene, with fewer holes and artifacts.

\begin{figure*}[tp]
	\centering
	\begin{subfigure}[t]{0.49\linewidth}
		\includegraphics[width=\linewidth]{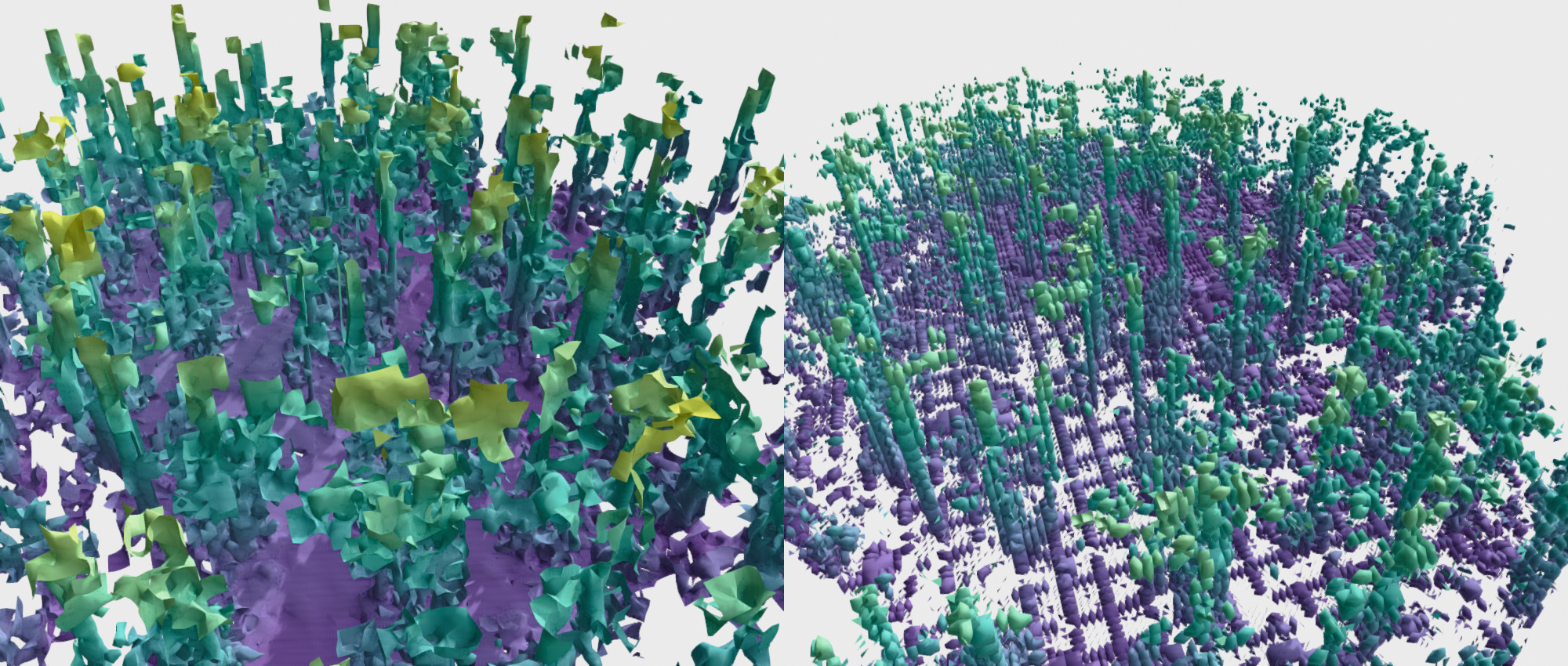}
		\caption{\footnotesize Forest}
		\label{fig:mesh_compare_forest}
	\end{subfigure}
	\hfill
	\begin{subfigure}[t]{0.49\linewidth}
		\includegraphics[width=\linewidth]{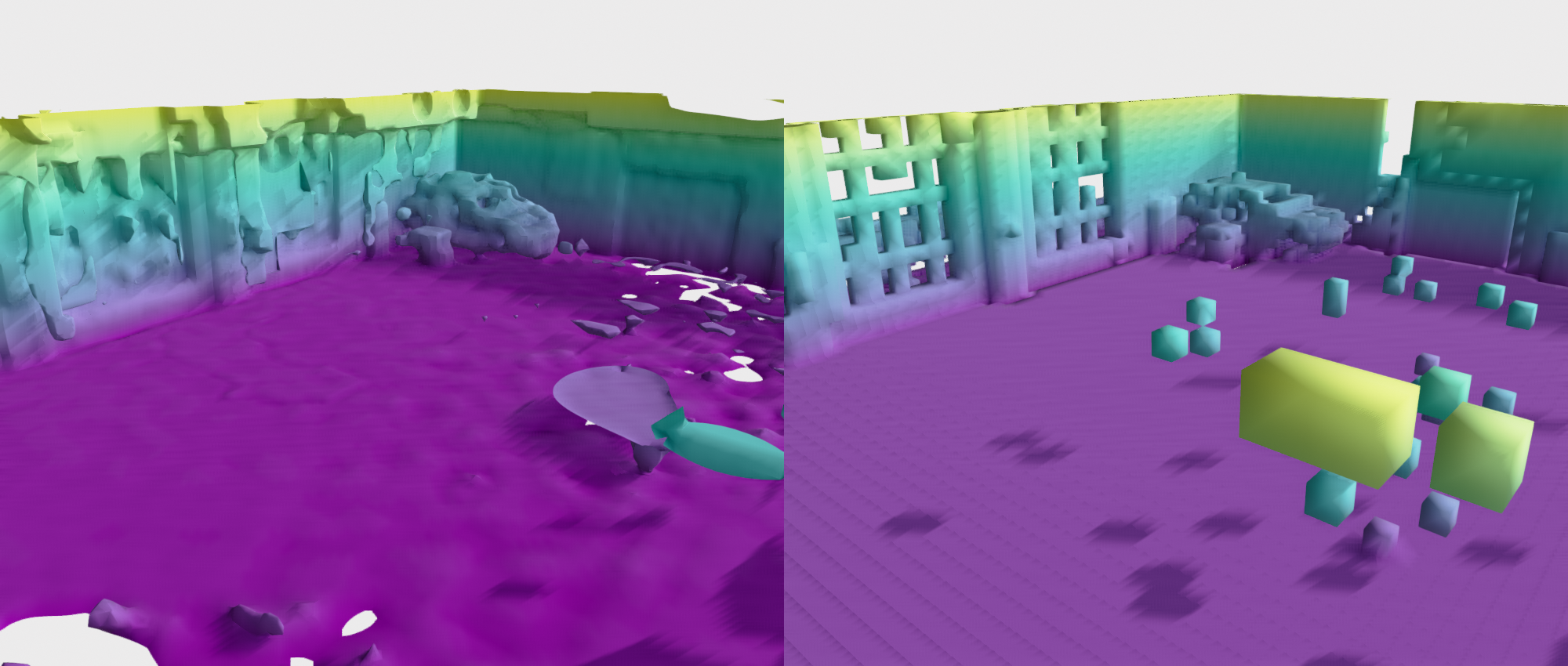}
		\caption{\footnotesize Garage}
		\label{fig:mesh_compare_garage}
	\end{subfigure}

	\begin{subfigure}[t]{0.49\linewidth}
		\includegraphics[width=\linewidth]{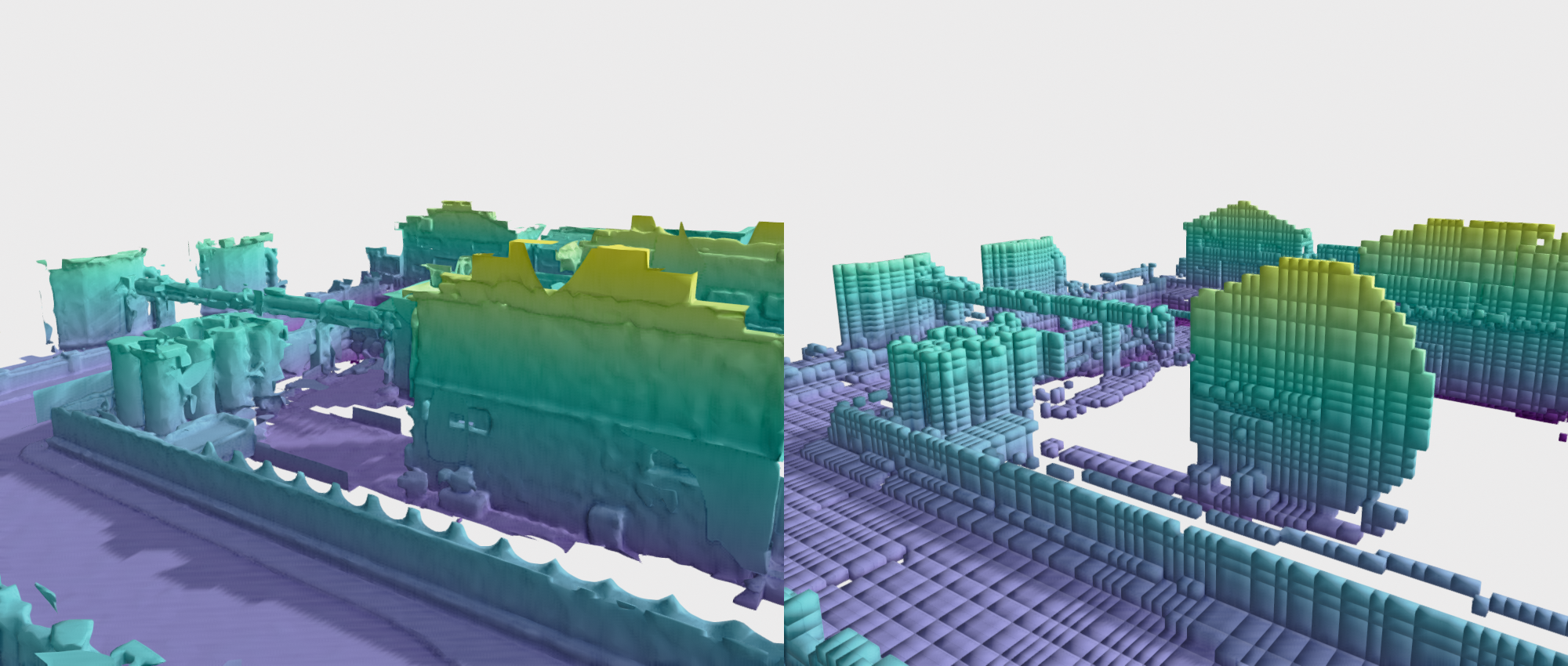}
		\caption{\footnotesize Industrial}
		\label{fig:mesh_compare_industrial}
	\end{subfigure}
	\hfill
	\begin{subfigure}[t]{0.49\linewidth}
		\includegraphics[width=\linewidth]{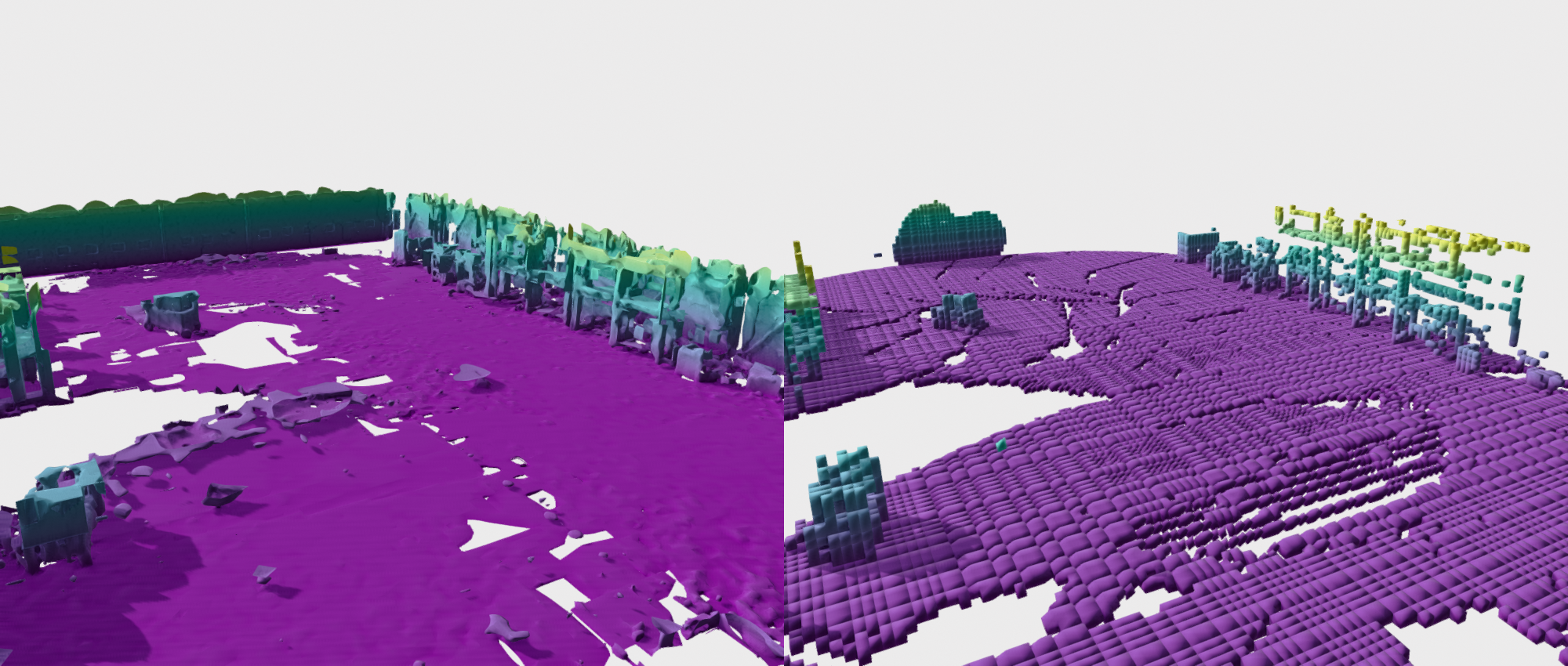}
		\caption{\footnotesize Warehouse}
		\label{fig:mesh_compare_warehouse}
	\end{subfigure}
	\caption{\small Map representations compared across four large-scale simulated environments. In each pair, \oren (left) reconstructs a continuous SDF surface and OctoMap~\citep{hornung13auro} (right) a discretized occupancy grid of the same scene. Each map is built online from a single depth camera on a simulated quadrotor flown through a sequence of fixed waypoints.}
	\label{fig:mesh_compare_sim}
\end{figure*}

\begin{figure}[t]
	\centering
	\includegraphics[width=\linewidth]{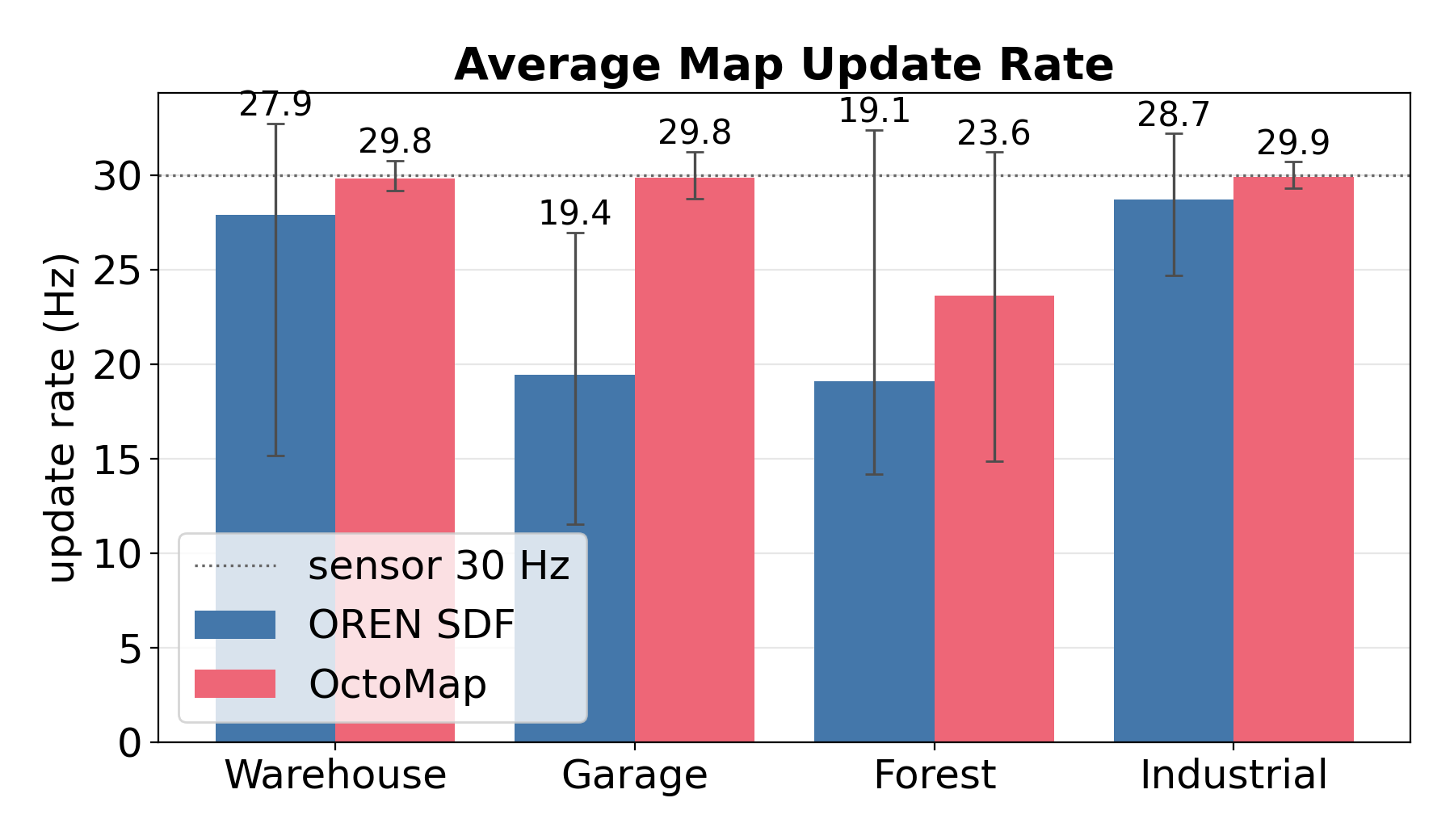}
	\caption{\small Average mapping update rate (Hz) of \oren and OctoMap~\citep{hornung13auro} across the four simulated environments. The depth camera streams at $30$~Hz (dotted line). Both run in real time. OctoMap is faster, as updating an occupancy grid is lighter than fitting an implicit field, whereas \oren provides the smooth, differentiable SDF that answers the distance queries \bubblestar depends on.}
	\label{fig:map_speed}
\end{figure}

The continuous SDF also carries richer information for planning. It encodes the distance to the nearest obstacle that \bubblestar uses for its efficiency and safety. This added information comes at only a modest expense in computation time. As Fig.~\ref{fig:map_speed} shows, \oren yields an update frequency comparable to OctoMap's, only around $1$~Hz slower in two of the four environments (warehouse and industrial), and $4$--$10$~Hz slower in the others (garage and forest).
The map update rate ($19.1$--$28.4$~Hz) remains amenable to real-time operation.
Compared to the benefit of the continuous distance information, we regard the increase in computation as a small price to pay, since it enables \bubblestar, which OctoMap cannot.

\begin{figure*}[tp]
	\centering
	\begin{subfigure}[t]{0.48\linewidth}
		\includegraphics[width=\linewidth]{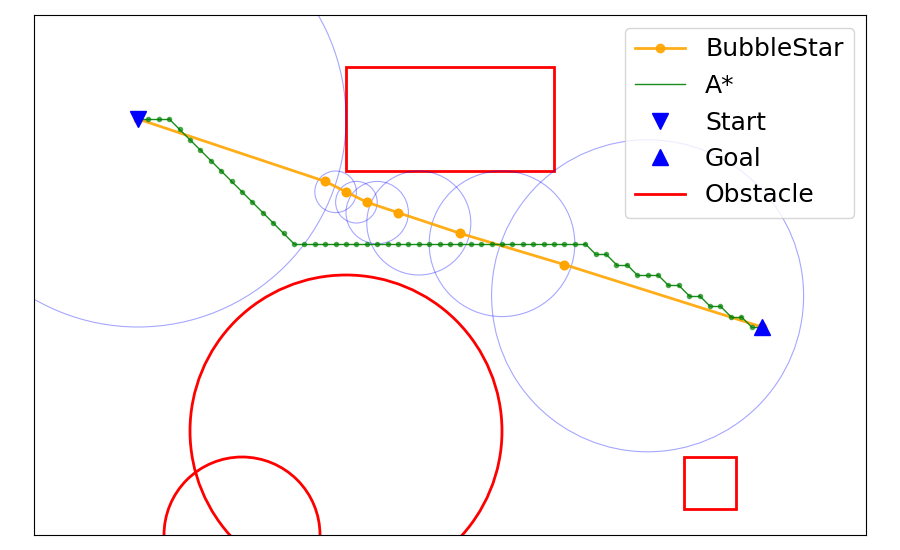}
		\caption{}
		\label{fig:astar_env}
	\end{subfigure}%
	\hfill%
	\begin{subfigure}[t]{0.5\linewidth}
		\includegraphics[width=\linewidth]{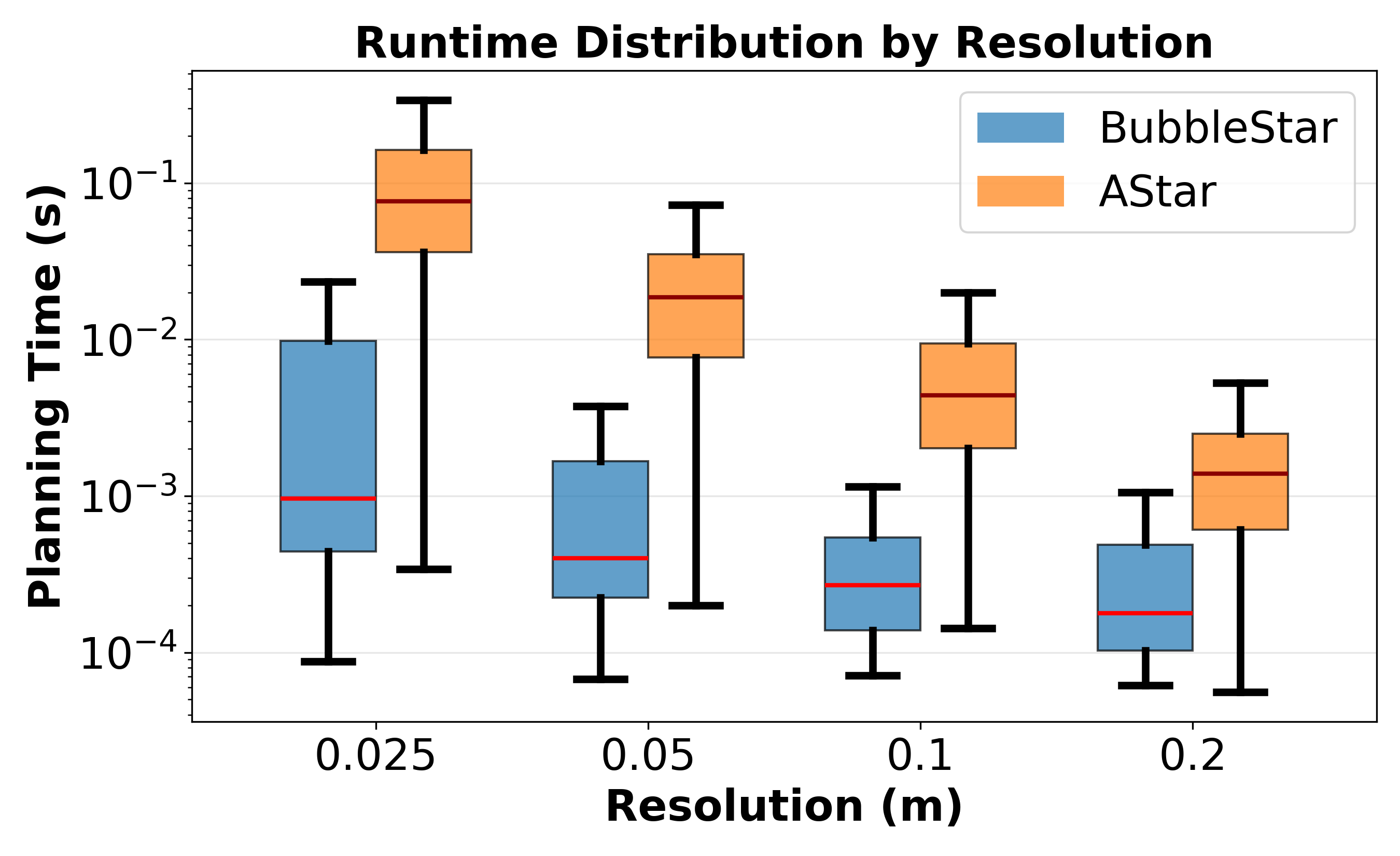}
		\caption{}
		\label{fig:astar_runtime}
	\end{subfigure}
	\caption{\small 2D search-efficiency comparison between \bubblestar and A$^\star$. (a) Representative paths in a 2D environment; SDF clearance lets \bubblestar take any-angle shortcuts that grid-restricted A$^\star$ cannot. (b) Runtime over 1000 randomly sampled start--goal pairs at several grid resolutions. Boxes span the interquartile range, the red line marks the median, and whiskers extend to the minimum and maximum.}
	\label{fig:astar_comparison}
\end{figure*}

\subsection{Planning Evaluation}
\label{sec:eval_planning}

We evaluate the efficiency of \bubblestar by the number of collision checks the planner issues and by the total planning time, and report path length and tracking error to verify that efficiency does not come at the cost of trajectory quality. We first isolate search efficiency in a controlled 2D comparison against grid-based A$^\star$, and then evaluate the complete planner in the 3D industrial and forest environments of Sec.~\ref{sec:eval_sim_mapping} against grid-based (A$^\star$) and sampling-based (RRT, RRT$^\star$) baselines. 

\paragraph*{Search Efficiency in 2D}

We first evaluate \bubblestar against A$^\star$ in a 2D environment shown in Fig.~\ref{fig:astar_comparison}. 
A$^\star$ binarizes the same SDF to determine occupancy and expands one neighboring node at a time, whereas \bubblestar uses the SDF magnitude to grow collision-free bubbles and expands only at their boundaries. Both planners use the Euclidean distance to the goal, $h(\bfp) = \norm{\bfp - \bfp_g}_2$, as the heuristic. Over $1000$ trials at several resolutions, \bubblestar performs $91$--$99\%$ fewer collision checks than A$^\star$ and finds a shorter or equal path in every case. Fig.~\ref{fig:astar_runtime} shows the runtime distribution, where \bubblestar is about an order of magnitude faster, with a wider gap at finer resolutions.

\begin{figure*}[t]
	\centering
	\begin{subfigure}[t]{0.49\linewidth}
		\includegraphics[width=\linewidth]{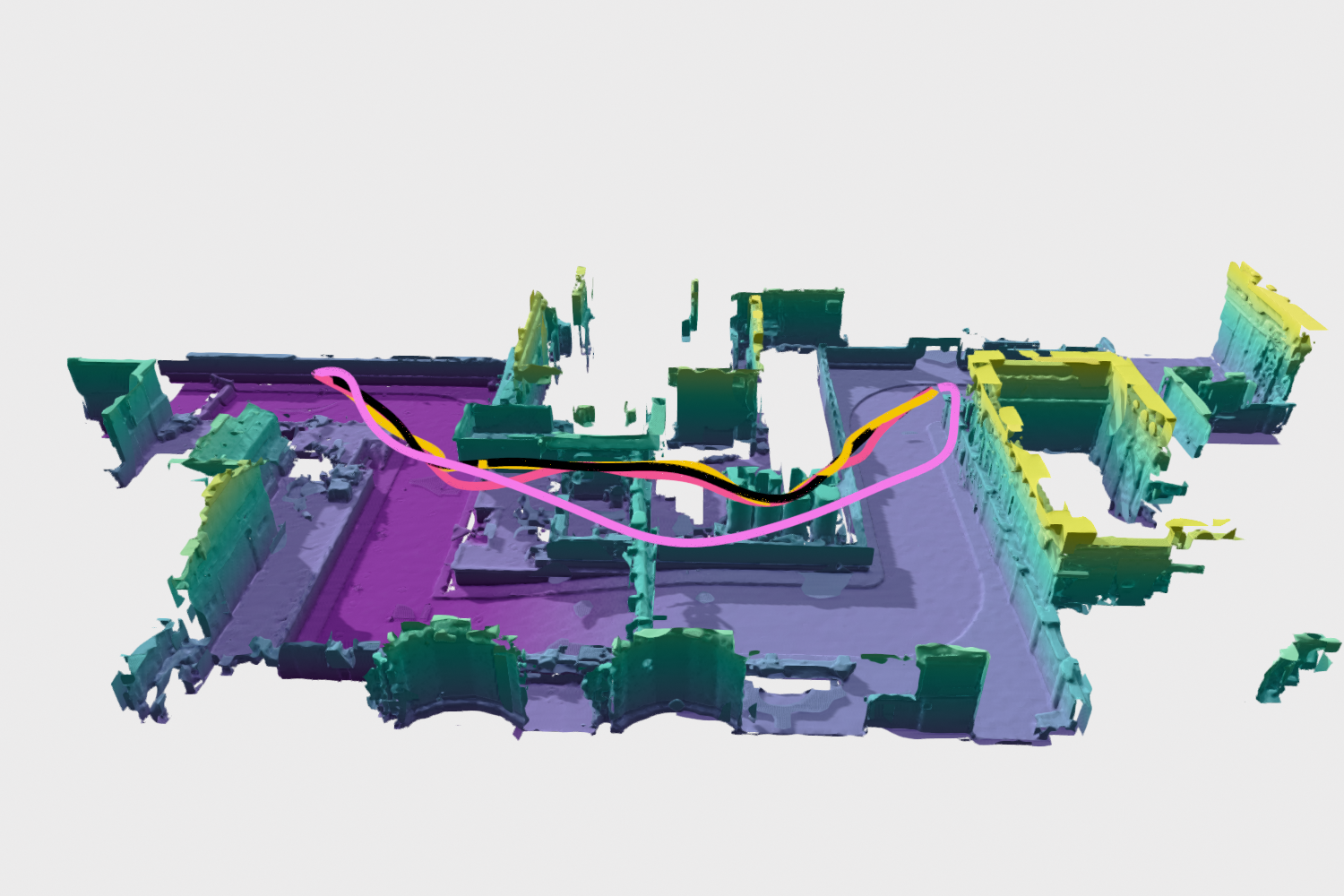}
		\caption{\footnotesize Planner trajectories}
		\label{fig:industrial_planner_comparison}
	\end{subfigure}
	\hfill
	\begin{subfigure}[t]{0.49\linewidth}
		\includegraphics[width=\linewidth]{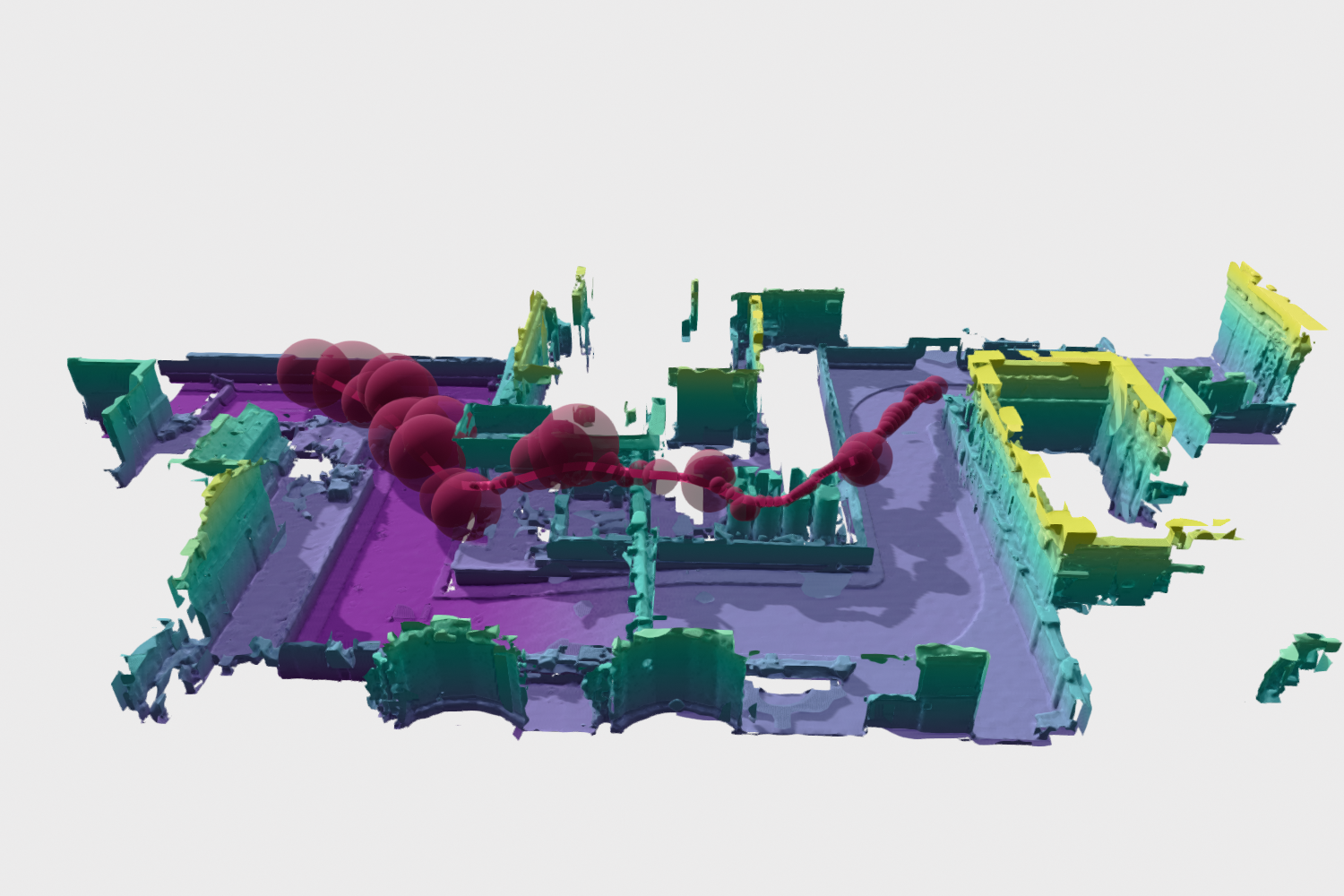}
		\caption{\footnotesize \bubblestar safe corridor}
		\label{fig:industrial_bubbles}
	\end{subfigure}
	\caption{\small Planning in the simulated industrial environment of Fig.~\ref{fig:mesh_compare_industrial}, all on the same \oren SDF. (a) Representative trajectories from \bubblestar ({\color{red}red}), A$^\star$ ({\color{orange}orange}), RRT ({\color{magenta}magenta}), and RRT$^\star$ (black). (b) The \bubblestar safe corridor: overlapping bubbles (red) grown along the optimized trajectory, whose large, mutually overlapping clearance gives the optimizer ample room and keeps the optimization fast (Table~\ref{tab:planner_comparison}).}
	\label{fig:industrial_planning}
\end{figure*}
\begin{table*}[tb]
	\centering
	\caption{\small Planning comparison in the simulated industrial and forest environments of Fig.~\ref{fig:mesh_compare_sim}. All planners operate on the same SDF map. Columns report the planning time (Search), the trajectory-optimization time (Opt.), their sum (Total), the path length (Path), the root-mean-square position error of the executed trajectory relative to the planned trajectory (RMSE), the number of overlapping bubbles forming the safe corridor (Bubbles), and their mean radius (Mean $r$). \textbf{Bold} marks the best value in each column and \underline{underline} the second best, per environment. For the Bubbles and Mean $r$ columns, which characterize the corridor handed to the trajectory optimizer, fewer bubbles and larger mean radius are better. \bubblestar attains the lowest total planning time in both environments while producing a compact high-clearance corridor.}
	\label{tab:planner_comparison}
	\footnotesize
	\setlength{\tabcolsep}{6pt}
	\begin{tabular*}{\textwidth}{@{\extracolsep{\fill}}llrrrrrrr}
		\hline
		Env & Planner & Search (ms) & Opt.\ (ms) & Total (ms) & Path (m) & RMSE (m) & Bubbles & Mean $r$ (m) \\
		\hline
		\multirow{4}{*}{Industrial}
		& \bubblestar & $485$ & $\mathbf{523}$ & $\mathbf{1008}$ & \underline{$86.0$} & $\mathbf{0.291}$ & $\mathbf{82}$ & \underline{$1.08$} \\
		& A$^\star$ & \underline{$223$} & $1005$ & \underline{$1228$} & $90.3$ & \underline{$0.297$} & $164$ & $0.55$ \\
		& RRT & $\mathbf{2}$ & $6068$ & $6069$ & $102.5$ & $0.327$ & $\mathbf{82}$ & $\mathbf{1.17}$ \\
		& RRT$^\star$ & $5079$ & \underline{$927$} & $6006$ & $\mathbf{85.4}$ & $0.314$ & \underline{$123$} & $0.69$ \\
		\hline
		\multirow{4}{*}{Forest}
		& \bubblestar & $\mathbf{298}$ & \underline{$3478$} & $\mathbf{3776}$ & $\mathbf{87.2}$ & \underline{$0.274$} & $\mathbf{311}$ & $\mathbf{0.30}$ \\
		& A$^\star$ & \underline{$466$} & $3687$ & \underline{$4152$} & \underline{$89.5$} & $\mathbf{0.265}$ & $341$ & $0.27$ \\
		& RRT & $2931$ & $\mathbf{3056}$ & $5987$ & $90.1$ & $0.281$ & $348$ & $0.27$ \\
		& RRT$^\star$ & $5050$ & $5318$ & $10368$ & $94.7$ & $0.324$ & \underline{$328$} & \underline{$0.28$} \\
		\hline
	\end{tabular*}
\end{table*}

\paragraph*{Planning in 3D Environments}

We compare \bubblestar in 3D environments against three baselines: grid-based A$^\star$ and the sampling-based RRT and RRT$^\star$ planners. To generate dynamically feasible trajectories for a quadrotor from the baselines, we place bubbles along the geometric paths found by the baseline planners, and use the same optimization routine as in Sec.~\ref{sec:bubble_star_trajectory}. All planners use the same 3D SDF obtained using \oren. We evaluate on the industrial site and the forest scene environments from Sec.~\ref{sec:eval_sim_mapping}.
A representative trajectory from each planner in the industrial scene is shown in Fig.~\ref{fig:industrial_planner_comparison}, and quantitative metrics for both scenes are reported in Table~\ref{tab:planner_comparison}.

In the industrial environment, \bubblestar achieves the lowest total planning time ($1.0$~s) while matching the shortest path within $1\%$ ($86.0$~m vs $85.4$~m by RRT$^\star$), well below A$^\star$ ($90.3$~m) and RRT ($102.5$~m).
The marginally shorter path by RRT$^\star$ is expected because RRT$^\star$ routes arbitrarily close to obstacles in continuous space, whereas \bubblestar searches on a grid with fixed resolution and keeps its trajectory inside high-clearance bubbles, trading a fraction of length for larger clearance margin.
 
The two planning stages, search and optimization, trade off differently across the methods. A$^\star$ searches quickly ($223$~ms) but the high number of bubbles created around the path ($164$) leads to a dense cell corridor and slow trajectory optimization ($1005$~ms). RRT finds a discrete path almost instantly ($2$~ms). However, because the path is jagged, it takes a long time ($6068$~ms) to optimize a continuous trajectory within bubbles centered at the RRT path. RRT$^\star$ requires a stopping criterion for search, and we chose one that is based on finding a path shorter than \bubblestar, but this causes RRT$^\star$ to spend most of its budget on search ($5.1$~s).
\bubblestar keeps both stages low ($485$~ms search, $523$~ms optimization). Tracking error is comparable across all planners, with \bubblestar achieving the lowest tracking error ($0.291$~m), confirming that its corridor does not compromise dynamic feasibility.

By construction, \bubblestar produces a corridor of overlapping safe regions, which keeps the subsequent trajectory optimization fast. The Bubbles and Mean $r$ columns of Table~\ref{tab:planner_comparison} quantify this: \bubblestar covers its path with $82$ overlapping bubbles of mean radius $1.08$~m (Fig.~\ref{fig:industrial_bubbles}), whereas A$^\star$'s dense cell path requires $164$ bubbles of roughly half the radius ($0.55$~m).
Fewer and larger bubbles leave the optimizer more room around each waypoint and fewer convex regions to stitch together, which is why \bubblestar optimizes in $523$~ms against the $1005$~ms A$^\star$ requires.
Larger bubbles alone are not sufficient, however: RRT attains an even larger mean radius ($1.17$~m) with the same bubble count, yet its randomly sampled path threads them along an irregular route and is the slowest of all to optimize ($6.1$~s).
\bubblestar is the only method that pairs a compact, high-clearance corridor with a smooth underlying path, which keeps both its search and its optimization fast.

This pattern still holds in the more cluttered forest environment. \bubblestar again attains the lowest total planning time ($3.8$~s) and produces the shortest path ($87.2$~m), with the most compact, highest-clearance corridor among the four planners ($311$ bubbles at a $0.30$~m mean radius).
The sampling-based planners remain the most expensive: RRT$^\star$ spends over $10$~s across search and optimization, and both produce longer paths than \bubblestar. 

Across both experiments, the efficiency of \bubblestar follows directly from the tight coupling between map and planner: by reading clearance straight from \oren's non-truncated SDF map and expanding whole collision-free bubbles rather than testing occupancy cell by cell, it issues far fewer signed-distance/collision queries than A$^\star$ while preserving the safety and completeness guarantees (Sec.~\ref{sec:bubble_star_completeness}).

\subsection{Real-World Autonomous Flight}
\label{sec:eval_onboard}

\begin{figure}[t]
	\centering
	\begin{subfigure}{\linewidth}
		\centering
		\includegraphics[width=\linewidth, trim=0cm 10cm 20cm 8cm, clip]{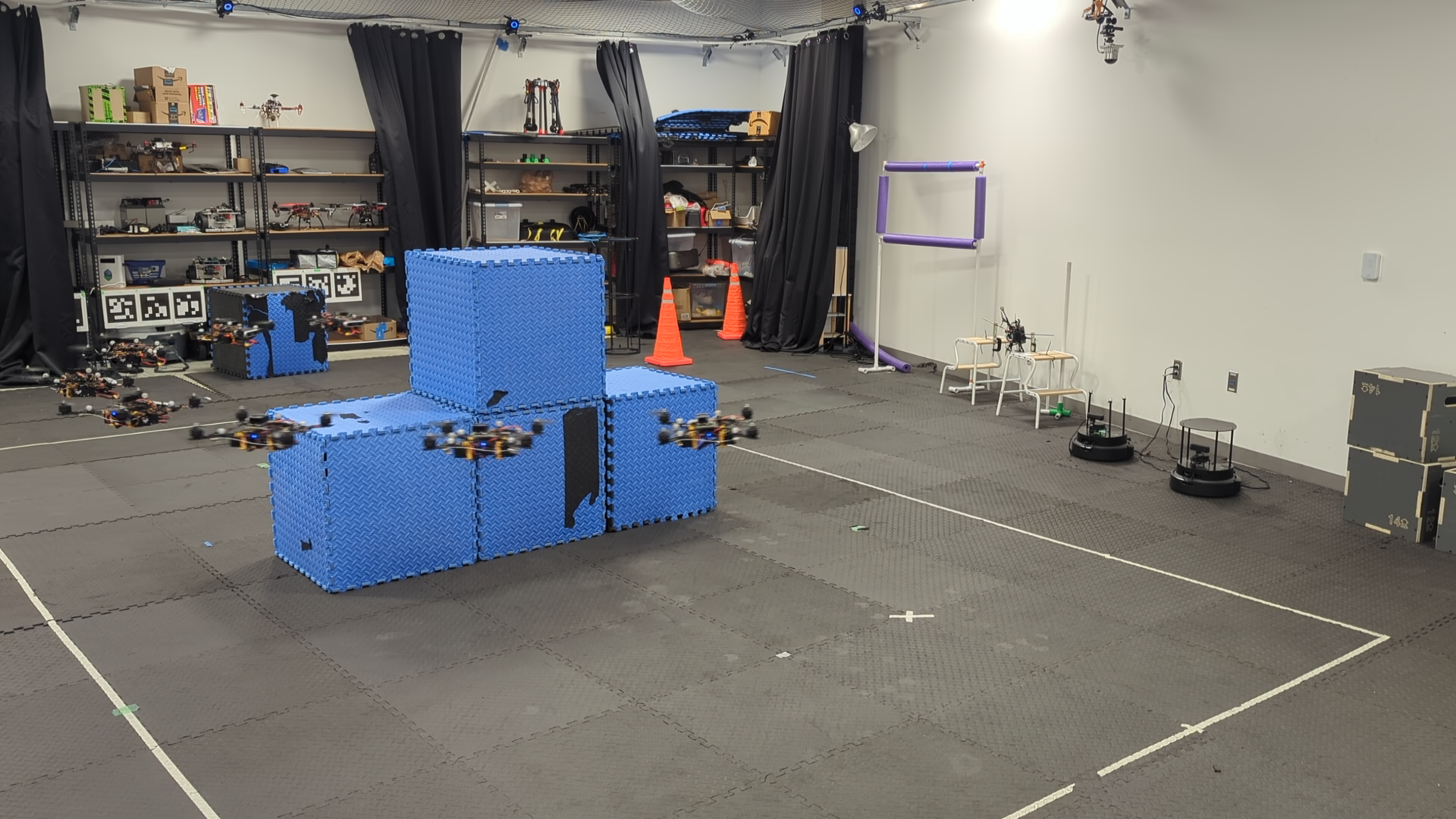}
		\caption{}
		\label{fig:mapping_run}
	\end{subfigure}
	\begin{subfigure}{\linewidth}
		\centering
		\includegraphics[width=\linewidth, trim={100pt 200pt 100pt 300pt}, clip]{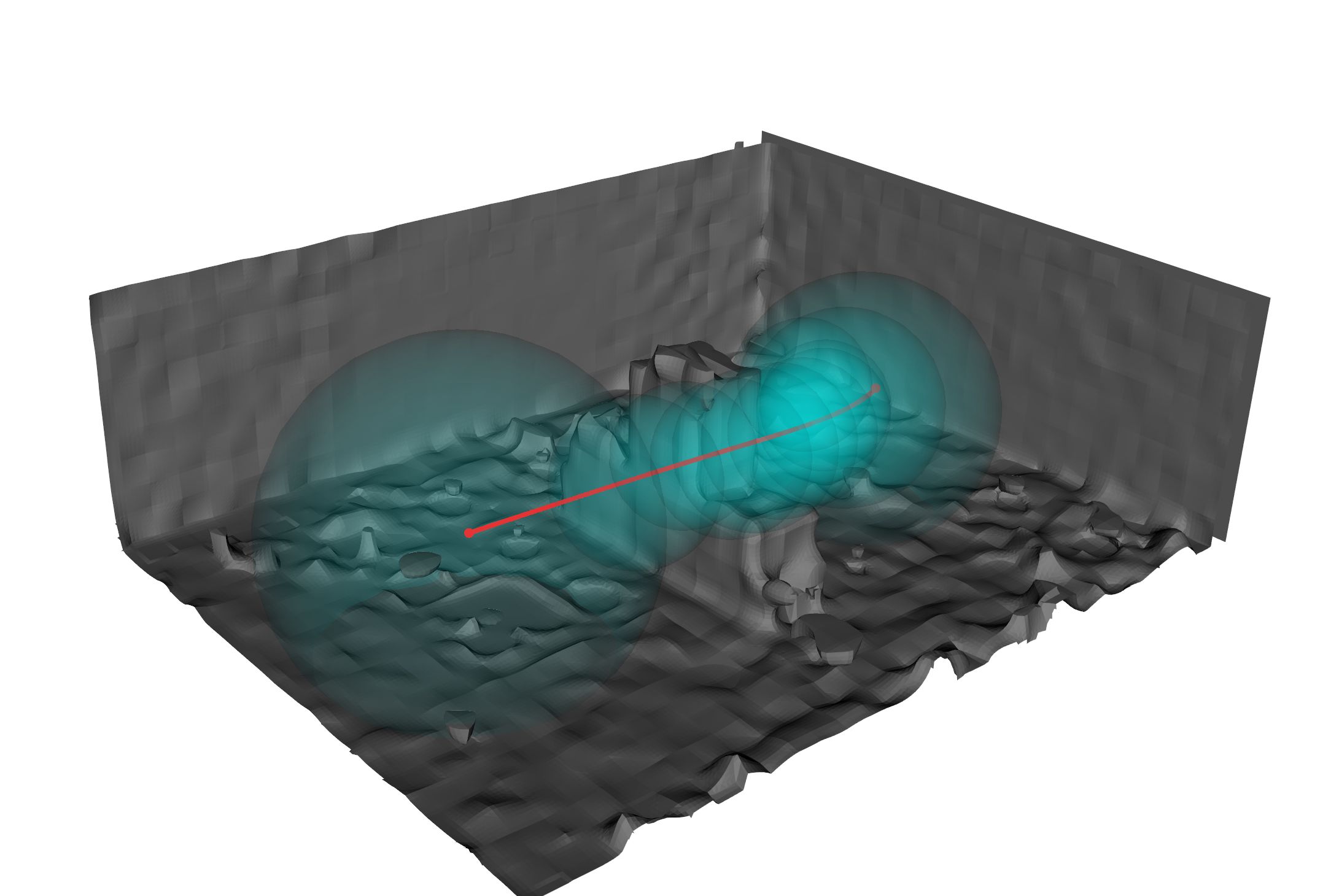}
		\caption{}
		\label{fig:onboard_flight}
	\end{subfigure}
	\caption{\small Fully onboard flight in an indoor environment: (a) A quadrotor circles a set of obstacles with yaw fixed along the acceleration direction; (b) \oren reconstructs SDF online (shown as gray mesh extracted from the surface), \bubblestar plans a bubble corridor (cyan) and optimizes a quadrotor trajectory (red), all computed in real time.}
	\label{fig:onboard_results}
\end{figure}

\begin{figure}[t]
	\centering
    \begin{subfigure}{\linewidth}
		\centering
        \includegraphics[width=\linewidth,trim=0cm 0cm 0cm 5cm, clip]{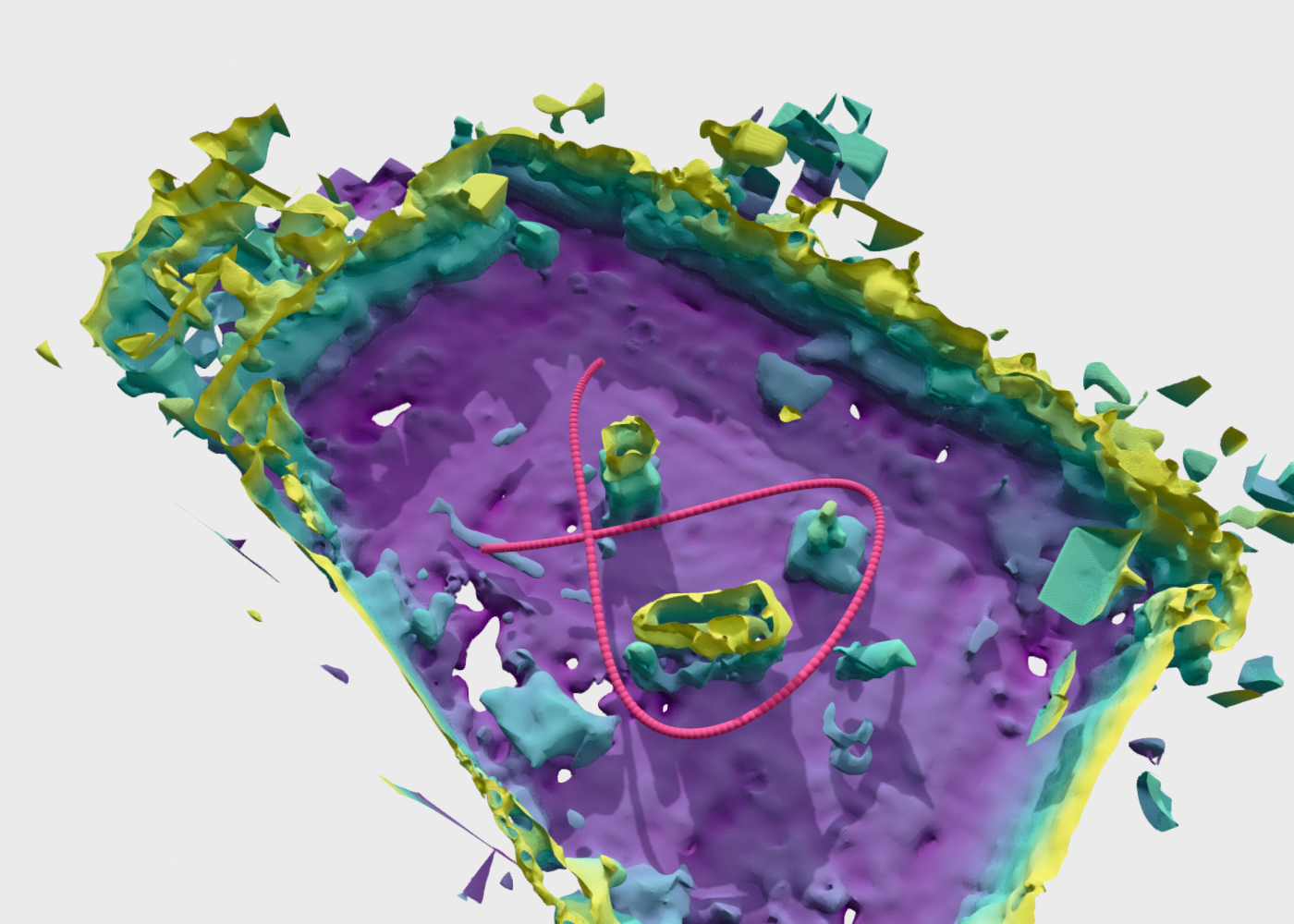}
		\caption{}
		\label{fig:onboard_route_image}
	\end{subfigure}
	\begin{subfigure}{\linewidth}
		\centering
		\includegraphics[width=\linewidth]{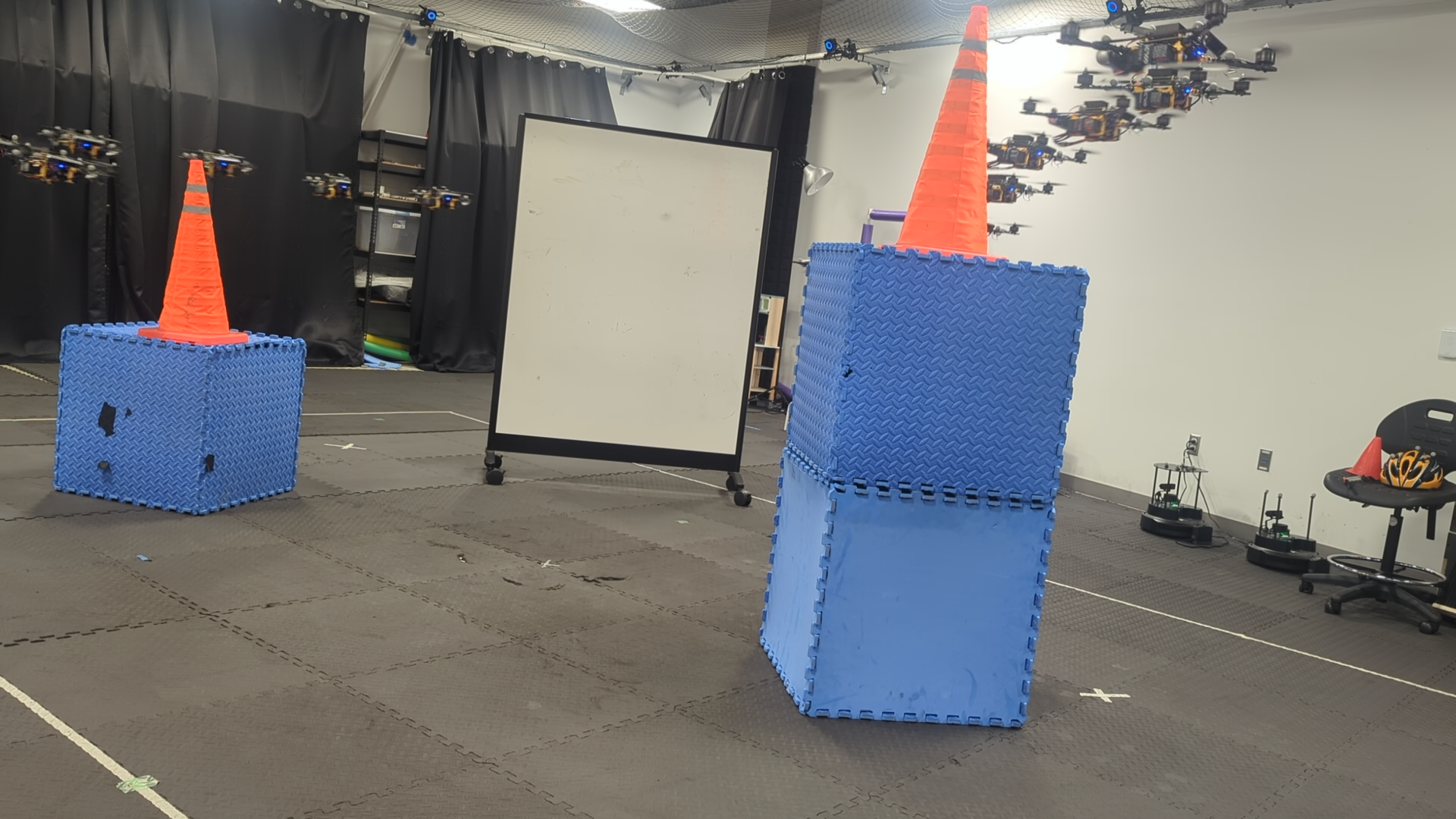}
		\caption{}
		\label{fig:onboard_traj_flight}   
	\end{subfigure}
	\caption{\small (a) Mesh extracted from \oren SDF onboard the quadrotor in real time and a looping trajectory (red) obtained by concatenating several \bubblestar plans into a single bubble sequence. (b) The robot executing the planned trajectory.}
\end{figure}

We have shown that \oren produces a non-truncated SDF efficiently (Sec.~\ref{sec:eval_sim_mapping}) and \bubblestar plans safe dynamically feasible trajectories using the non-truncated SDF values (Sec.~\ref{sec:eval_planning}). We now validate our integrated approach in real-world flight, with \oren mapping and \bubblestar planning running onboard a quadrotor equipped with a Jetson Orin NX. The UAV navigates a previously unseen indoor environment, building the SDF online from streaming depth measurements and planning safe corridors in real time.

Fig.~\ref{fig:mapping_run} shows a representative deployment of our method. \oren runs at $7$~Hz to provide the mesh shown in Fig.~\ref{fig:onboard_flight}. \bubblestar uses the SDF values from \oren to generate the trajectory shown in Fig.~\ref{fig:onboard_flight}. Beyond point-to-point planning, several \bubblestar plans can be concatenated into a single sequence of overlapping bubbles, letting the quadrotor follow a longer, more complex route while preserving the per-segment clearance guarantees of Sec.~\ref{sec:bubble_star_completeness}. Fig.~\ref{fig:onboard_route_image} shows a multi-segment trajectory executed onboard, threading a looping path through the SDF mesh reconstructed during the flight. This trajectory is created by selecting three waypoints and sequentially planning from one to the next. We then concatenate these sequences of bubbles and find a single trajectory through the entire route. Table~\ref{tab:real_planning_results} shows statistics on computation time, clearances, and trajectory lengths for this multi-segment trajectory. The planner maintains a safe distance from obstacles as it navigates around them. Fig.~\ref{fig:onboard_traj_flight} visualizes the robot flying this trajectory.

These results show that combining an SDF map with a planner that exploits SDF to construct safe corridors for trajectory optimization makes safe autonomous flight tractable under tight compute and memory budgets.

\begin{table}[t]
  \centering
  \caption{\small Runtime and tracking performance for the trajectory shown in Fig.~\ref{fig:onboard_traj_flight}. Clearance values are shown after accounting for the robot footprint.}
  \label{tab:real_planning_results}
  \setlength{\tabcolsep}{4pt}
  \begin{tabular*}{\columnwidth}{@{\extracolsep{\fill}}lr}
    \toprule
    Planning time (\bubblestar)      & $76.96$~ms \\
    Optimization time                & $142.33$~ms \\
    Total plan time                  & $219.30$~ms \\
    \midrule
    Min.\ obstacle distance, planned & $0.313$~m \\
    Min.\ obstacle distance, flown   & $0.169$~m \\
    Flown clearance (p5 / median)    & $0.297$ / $0.804$~m \\
    \midrule
    Segments / pieces                & $3$ / $39$ \\
    Trajectory duration              & $17.38$~s \\
    \bottomrule
  \end{tabular*}
\end{table}

\section{Conclusion}
\label{sec:conclusion}

This paper developed an efficient unified approach for SDF reconstruction and SDF-accelerated motion planning for safe autonomous flight.
Our mapping method, \oren, reconstructs differentiable non-truncated SDF online by combining an explicit octree prior with an implicit neural residual, attaining the accuracy of neural network methods at the memory and runtime efficiency of volumetric ones.
Exploiting the SDF representation, \bubblestar constructs a graph of collision-free bubbles, which forms a safe corridor with guarantees of termination, completeness, and failure detection.
Such a bubble corridor defines safety constraints for dynamically feasible trajectory optimization. Compared with baselines, \bubblestar is the only method that pairs a compact high-clearance corridor with a smooth underlying path, which keeps both its search and its trajectory optimization fast.

We demonstrate the integrated mapping and planning approach running fully onboard a quadrotor in real time. This shows that an accurate implicit map representation makes autonomous flight tractable under tight onboard resources. In our experiments, \oren maintains real-time speed while supplying the continuous differentiable SDF that occupancy-grid mappers cannot, and \bubblestar exploits this representation to issue $91$--$99\%$ fewer collision checks than A$^\star$ and to attain the lowest total planning time while matching the shortest path. Together, these results enable the complete mapping and planning loop to run efficiently onboard a Jetson Orin NX.

\section*{Funding}
We gratefully acknowledge support from ARL DCIST CRA W911NF-17-2-0181 (N. Atanasov, Z. Dai), a research gift fund established by Shield AI (J. Stanley), and the Ministry of Trade, Industry and Energy (MOTIE), Korea, under the Strategic Technology Development Program, supervised by Korea Institute for Advancement of Technology (KIAT) [Grant No. P0026052] (K.M.B. Lee).

\section*{Statements and Declarations}
\textbf{Conflict of interest} \ The authors have no competing interests to declare that are relevant to the content of this article.

\textbf{Data Availability} \ A public demonstration of \oren and \bubblestar, including code to run the integrated mapping and planning approach, is available at \url{https://github.com/ExistentialRobotics/erl_oren_bubble_star_demo}.

\textbf{Author Contributions} OREN extension concepts: ZD, NA; Developing OREN: ZD, QQ, TF; Planner concept: JS, KMBL; Developing planning algorithm: JS, TH; Quadrotor experiments: JS, TH, QQ, SS, CB; Writing main manuscript: JS, ZD; Writing, review and editing: KMBL, NA; Supervision: NA. 
\bibliography{bib/ref}

\end{document}